\documentclass[11pt]{article}

\usepackage[final]{acl}

\usepackage{times}
\usepackage{latexsym}

\usepackage[T1]{fontenc}

\usepackage[utf8]{inputenc}

\usepackage{microtype}
\usepackage{inconsolata}
\usepackage{graphicx}
\usepackage{booktabs}
\usepackage{tabularx}
\usepackage{threeparttable}
\usepackage{array}
\usepackage{makecell}
\usepackage{amsmath}
\usepackage{amssymb}
\usepackage{amsthm}
\usepackage{dsfont}
\usepackage[ruled,lined,noend]{algorithm2e}

\newtheorem{theorem}{Theorem}
\newtheorem{proposition}[theorem]{Proposition}
\newtheorem{assumption}[theorem]{Assumption}

\theoremstyle{remark}

\newcommand{\skill}[1]{\textsc{#1}}
\providecommand{\skill}[1]{\textsc{#1}}
\newcommand{\methodname}{\textsc{ReuseRL}}

\title{Skill Reuse as Compression in Agentic RL}

\author{ \textbf{Zhikun Xu\textsuperscript{1}},
 \textbf{Yu Feng\textsuperscript{2}},
 \textbf{Jacob Dineen\textsuperscript{1}},
 \textbf{Taiwei Shi\textsuperscript{3}},
 \textbf{Jieyu Zhao\textsuperscript{3}},
 \textbf{Ben Zhou\textsuperscript{1}}
\\[1ex]
 \textsuperscript{1}Arizona State University,
 \textsuperscript{2}University of Pennsylvania,\\
 \textsuperscript{3}University of Southern California
\\
\texttt{zhikunxu@asu.edu}}

\begin{document}
\maketitle

\begin{abstract}
Large language model agents trained with reinforcement learning (RL) often learn brittle, task-specific shortcuts. We hypothesize that agents generalize better when their successful trajectories are structurally compressible, decomposed into a small set of reusable abstract patterns. To formalize this, we introduce \methodname{}, which grounds agentic RL in the Minimum Description Length (MDL) principle. \methodname{} extracts a shared skill dictionary from successful trajectories and augments the RL objective with a segmentation cost, explicitly penalizing idiosyncratic behaviors that encode poorly. We prove a PAC-Bayes bound guaranteeing that a dictionary extracted from successful trajectories has bounded expected description length on future successful behavior. Across ALFWorld, TextWorld-Cooking, and Countdown-Stepwise, \methodname{} improves in- and out-of-distribution success over vanilla GRPO and strong round-length baselines.\footnote{\url{https://github.com/ARC-ASU/ReuseRL}}
\end{abstract}

\section{Introduction}\label{sec:intro}

Large language model (LLM) agents have demonstrated remarkable capabilities across complex interactive tasks, from embodied household reasoning \citep{shridhar2021alfworld} and text-based games \citep{cote2018textworld} to symbolic reasoning \citep{stojanovski2026reasoning}, by combining natural-language reasoning with environmental interaction \citep{ReAct,shinn2023reflexion}. Reinforcement learning (RL) has become a central training paradigm for these agents \citep{schulman2017ppo,shao2024deepseekmath}, with multi-turn agentic variants of GRPO now driving much of the recent progress on long-horizon decision-making \citep{jin2025searchr1,feng2025gigpo,xia2026skillrl,shi2026experiential}.

Yet a persistent challenge is that RL training often entrenches brittle behavioral shortcuts. This failure mode, sometimes termed reasoning collapse, arises when prolonged optimization amplifies memorized action patterns at the expense of transferable reasoning \citep{wang2025ragen,wang2026reasoning,wang2026ragen}. We do not claim this failure has a single cause; rather, we isolate one structural factor we can both formalize and act on: rewarding task success alone permits idiosyncratic, one-off solution patterns that satisfy the training tasks but transfer poorly. Crucially, compact, repeated solution patterns are desirable for generalization, so the difficulty is not that the policy forms shortcuts but that it also forms idiosyncratic ones that complete training tasks without composing into reusable skills. Recent efforts combat this by augmenting RL with reusable structure or experience via teacher distillation \citep{xia2026skillrl}, reflection loops \citep{shi2026experiential}, or external memory \citep{shinn2023reflexion,zhao2024expel,wang2023voyager}. But it remains unclear which shared structural property of the learned behavior is responsible for their gains.

\begin{figure*}[t]
\centering
\includegraphics[width=\textwidth]{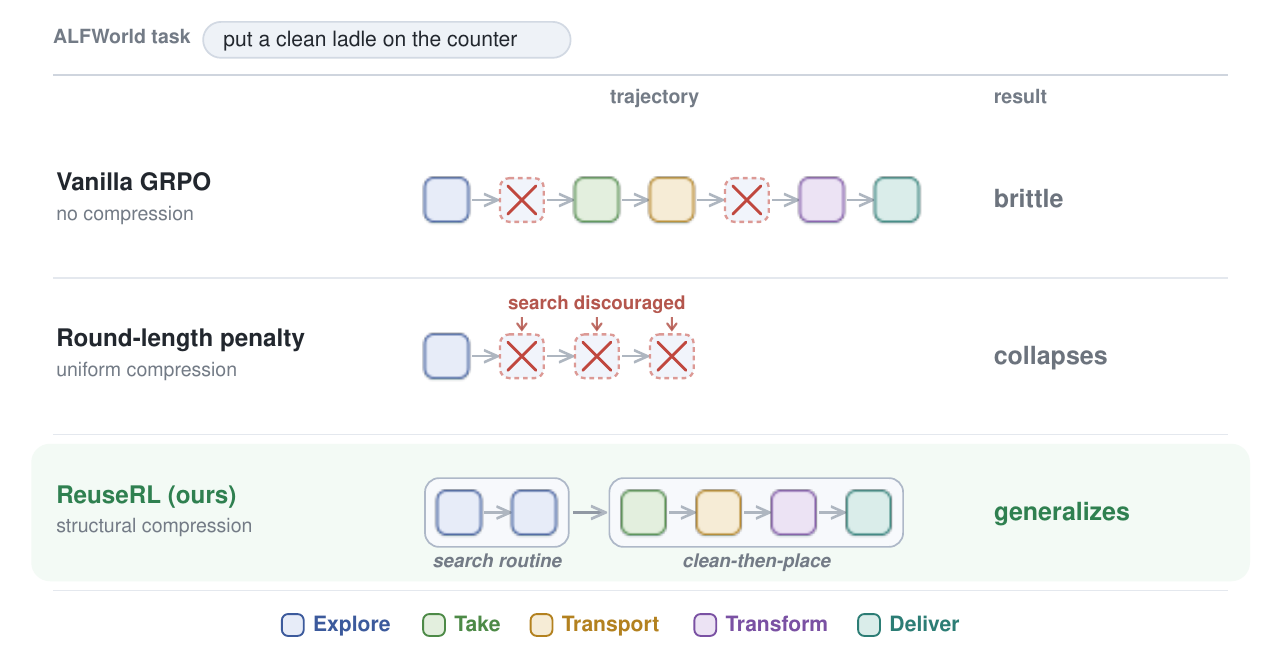}
\caption{\small \textbf{\methodname{} distinguishes reusable compression from raw brevity.} Each colored box is one ALFWorld atomic skill. Vanilla GRPO optimizes task success alone and can produce long trajectories with repeated or wasted steps. A pure round-length penalty is a degenerate singleton-only code that penalizes all steps uniformly, including the necessary search, so the agent under-explores and never reaches the target. \methodname{} learns a multi-skill dictionary from successful trajectories and uses the segmentation cost under this dictionary as a trajectory-level penalty, keeping reusable subroutines cheap while leaving idiosyncratic waste expensive.}
\label{fig:teaser}
\vspace{-1em}
\end{figure*}
We conjecture that many existing methods owe part of their gains to implicitly encouraging reusable structure. In both reinforcement learning and information theory, such reusable structure is closely tied to data compression and the Minimum Description Length (MDL) principle \citep{rissanen1978modeling,grunwald2007minimum}. This connection suggests a broader conjecture: successful, generalizable behaviors are precisely those that admit compact, reusable decompositions. We formalize this property as the structural compressibility of successful trajectories, defined as the degree to which successful behavior, represented as a sequence of abstract skills, is built by composing a small set of reusable patterns. As illustrated in Fig.~\ref{fig:teaser}, an agent whose successes decompose into a compact repertoire of reusable skill compositions reasons more robustly than one that relies on idiosyncratic, one-off sequences. This perspective naturally bridges temporal abstraction in hierarchical RL, where options compress behavior into reusable subroutines \citep{sutton1999between}, with the MDL principle. While BPE-based action tokenization \citep{sennrich2016neural,pmlr-v235-zheng24b} and MDL-guided skill discovery \citep{zhang2021minimum, jiang2022learning} have begun bridging these ideas by treating behavioral abstraction as compression, none integrate this compression directly into the RL training signal as a trajectory-level penalty for LLM agents. Consequently, the compression-generalization hypothesis remains untested in this setting.

To test this hypothesis, 
our contributions are as follows. 
\textbf{(1)} We first formalize the hypothesis that structural compressibility is a key driver of reasoning generalization in RL-trained LLM agents. \textbf{(2)} We introduce \methodname{}, a minimal framework that extracts a shared skill dictionary online from successful trajectories via greedy BPE-style merging and augments the per-trajectory GRPO signal with the resulting segmentation cost. \textbf{(3)} We ground our framework in the MDL principle via an EM-style alternating optimization, show that the pure round-length penalty is a degenerate fixed-code segmentation cost (Proposition \ref{prop:length}), establish a PAC-Bayes bound on the expected description length of successful behavior (Theorem \ref{thm:main}), and provide a unifying interpretation of SkillRL and ERL as implicit, partial MDL minimizers. \textbf{(4)} On ALFWorld \citep{shridhar2021alfworld}, TextWorld-Cooking \citep{cote2018textworld}, and Countdown-Stepwise \citep{stojanovski2026reasoning}, \methodname{} improves over both vanilla GRPO and the pure round-length baseline. Crucially, these results support our central premise: successful trajectories should not merely be short; they should be compressible under a reusable skill dictionary. We isolate this effect in a self-contained RL loop, deliberately omitting experience-augmented methods like SkillRL or ERL that rely on auxiliary supervision. This allows us to rigorously isolate and evaluate the effect of the MDL principle without introducing confounding factors. We emphasize that our objective is not to demonstrate superiority over experience-augmented systems or to establish a new state of the art.

\section{Related Work}\label{sec:related}
\subsection{Grammar-Based and MDL-Guided Hierarchical Abstraction}
Skill discovery can be framed as sequence compression, where hierarchical abstraction yields reusable subroutines. Grounded in the MDL principle~\citep{rissanen1978modeling,grunwald2007minimum}, prior methods learn compact skill codebooks that balance dictionary size against how well the data is described~\citep{zhang2021minimum,jiang2022learning}. In RL, BPE-style merging of frequent action patterns extends this idea to temporally extended macro-actions~\citep{kozma2024theoretical}. PRISE~\citep{pmlr-v235-zheng24b} applies compression-based abstractions to improve sample efficiency, and SkillVLA~\citep{zhai2026skillvla} leverages reusable skills for combinatorial task diversity in embodied settings. PAC results for state abstraction~\citep{cipollone2025realizable} further show that compressed representations can preserve near-optimal behavior. \methodname{} builds on this line of work but applies the MDL objective online, as a per-trajectory RL signal on successful behavior, instead of as an offline criterion for constructing reusable skills.

\subsection{Skill Consolidation and Experience-Augmented Reasoning}
LLM agents increasingly build skills through environmental interaction. During training, ERL \citep{shi2026experiential} and SkillRL \citep{xia2026skillrl} distill environmental feedback into behavioral corrections. Similarly, systems like RAGEN \citep{wang2025ragen} and DYSTIL \citep{wang2025dystil} self-improve through dense feedback, while others rely on retrieved past experiences \citep{shinn2023reflexion, zhao2024expel, wang2023voyager}. Left unchecked, however, accumulating self-generated trajectories can trigger reasoning collapse, amplifying memorized shortcuts at the expense of transferable behavior \citep{wang2025ragen,wang2026ragen}. This motivates explicit compression of reusable behavioral structure as a regularizer, which is exactly what \methodname{} adds without relying on teacher distillation, retrieval, or other auxiliary modules.

\subsection{Information-Theoretic Policy Compression}
Recent works apply information-theoretic objectives to constrain LLM reasoning traces, such as adaptively penalizing raw reasoning length \citep{wu2025lapo,yuan2026shortenyourerightlazy,lin2026boostingllmreasoninghumaninspired} or measuring token cost via surprisal under a language-model prior \citep{massoli2026reasoning}. Crucially, these approaches shape reasoning at the token or length level. In contrast, \methodname{} operates at the behavioral level by compressing the structure of successful skill sequences rather than their surface-level length.

\section{\methodname{}: Skill Reuse as Compression RL}\label{sec:method}
\subsection{Problem Setup and Skill Projection}\label{sec:setup}

We consider a task distribution $\mathcal{D}$ over multi-turn reasoning tasks. For each sampled environment $d \sim \mathcal{D}$, a language model policy $\pi_\theta$ interacts with a partially observable environment for at most $T$ steps, producing a trajectory $\tau \sim \pi_\theta(\cdot \mid d)$ of the form $\tau = (o_1, a_1, o_2, a_2, \ldots, o_T, a_T)$, where $o_t$ is the observation and $a_t$ is the action at step $t$. Each trajectory receives a binary terminal reward $R(\tau) \in \{0, 1\}$ indicating success or failure. The standard RL objective maximizes the expected per-trajectory success rate:
\begin{equation}\label{eq:rl-objective}
\max_\theta \; \mathbb{E}_{d \sim \mathcal{D},\; \tau \sim \pi_\theta(\cdot \mid d)} \bigl[ R(\tau) \bigr]
\end{equation}
In GRPO-style training~\citep{shao2024deepseekmath}, Eq.~\eqref{eq:rl-objective} is optimized through sampled batches $\mathcal{B} = \{(d_i,\tau_i)\}_{i=1}^{n}$ and per-trajectory advantages computed inside each batch. 
As argued in \S\ref{sec:intro}, the structural factor we isolate is the absence of pressure toward structurally reusable successful behavior under a success-only reward. The remainder of this section makes this notion precise: we represent behavior as a sequence of atomic skills, measure its reuse through a shared skill dictionary, and derive the per-trajectory signal that rewards compressible successful behavior.

\paragraph{Skill alphabet and projection.}
We represent behavior at a higher level as a sequence of discrete \emph{atomic skills}, where each skill corresponds to an interpretable primitive such as \textsc{Explore}, \textsc{Take}, etc. Let $\Sigma = \{\sigma_1, \ldots, \sigma_{K_\Sigma}\}$ denote a finite \emph{atomic skill alphabet}, where each $\sigma_i$ is a skill. A \emph{skill projection} $\varphi$ maps each trajectory $\tau$ to a skill sequence $s \in \Sigma^*$, abstracting away low-level action details: $\varphi: \tau \mapsto s \in \Sigma^*$. $K_\Sigma := |\Sigma|$ is the size of the skill alphabet and $^*$ denotes the Kleene star. In our instantiation, $|\Sigma|$ is small and environment-specific, and $\varphi$ is implemented by a rule-based mapping over the verbs of the raw actions; per-environment mapping rules are summarized in \S\ref{sec:experiments}.

\paragraph{Skill dictionary.}
To measure structural reuse, we represent successful behavior using a \emph{skill dictionary} $\mathcal{C} = \{p_1, \ldots, p_M\}$, where each entry $p_j \in \Sigma^{\leq L}$ is a short skill sub-sequence of length at most $L$. Each $p_j$ is therefore a reusable subsequence of atomic skills, and $|p_j|$ refers to the number of atomic skills in $p_j$. Given the group of successful trajectories $\mathcal{B}^+$, 
we restrict our attention to the universe of valid dictionaries, $\mathcal{A} := \{\mathcal{C} \subseteq \Sigma^{\le L} : \Sigma \subseteq \mathcal{C}\}$, which guarantees that every dictionary contains all atomic singleton skills. This ensures that any atomic skill sequence $s=\varphi(\tau), \tau\in\mathcal{B}^+$ can always be exactly represented using phrases from $\mathcal{C}$, in the worst case as a concatenation of singletons. 
Given $s$ and $\mathcal{C}$, let $\mathrm{seg}(s,\mathcal{C})$ denote the minimum number of dictionary phrases needed to cover $s$ exactly; we compute this quantity by dynamic programming. Smaller values of $\mathrm{seg}(s,\mathcal{C})$ indicate that $s$ is more compressible under $\mathcal{C}$. Since $\mathcal{B}^+ \subseteq \mathcal{C}^*$ 
and $\varphi$ assigns at most one skill label per environment step, every non-empty skill sequence satisfies: $1 \leq \mathrm{seg}(s,\mathcal{C}) \leq |s| \leq T$.

\subsection{Two-Part Description Length of the Successful Batch}\label{sec:bdl}

For a sampled batch $\mathcal{B} = \{(d_i,\tau_i)\}_{i=1}^{n}$, let $\mathcal{B}^+ := \{\tau_i : R(\tau_i)=1, \tau_i\in\mathcal{B}\}$ denote the subset of successful trajectories, and let $s_i = \varphi(\tau_i)$ be the skill sequence of $\tau_i \in \mathcal{B}^+$. When $\mathcal{B}^+ \neq \emptyset$, we evaluate a candidate skill dictionary $\mathcal{C}$ using a standard two-part code following the MDL principle~\citep{rissanen1978modeling,grunwald2007minimum}: one part encodes the shared dictionary itself, and the other encodes the successful skill sequences using phrases from that dictionary. We thus define the dictionary cost as
\begin{equation}\label{eq:dict-cost}
L_{\mathrm{skill\_dict}}(\mathcal{C}) = \sum_{j=1}^{M}\bigl(|p_j|\log_2 K_\Sigma + \log_2 L\bigr)
\end{equation}
which measures the cost of storing the $M$ phrases in $\mathcal{C}$: $|p_j|\log_2 K_\Sigma$ accounts for the cost of specifying the atomic skills in $p_j$, and $\log_2 L$ accounts for its length. The corresponding batch-level description length of $\mathcal{B}^+$ under $\mathcal{C}$ is
\begin{equation}\label{eq:bdl}
\begin{aligned}
\mathcal{L}_{\mathrm{BDL}}(\mathcal{B}^+, \mathcal{C})
&= \frac{L_{\mathrm{skill\_dict}}(\mathcal{C})}
        {\lvert \mathcal{B}^+ \rvert} \\
&+ \frac{1}{\lvert \mathcal{B}^+ \rvert}
\sum_{\tau_i \in \mathcal{B}^+}
\operatorname{seg}\!\left(\varphi(\tau_i), \mathcal{C}\right)
\log_2 \lvert \mathcal{C} \rvert
\end{aligned}
\end{equation}
The first term is the shared cost of the dictionary, amortized across successful trajectories. The second term is the average cost of reconstructing their skill sequences using phrases from $\mathcal{C}$. From a standard coding perspective, the dynamic programming segmentation partitions a sequence into $\operatorname{seg}(\varphi(\tau_i), \mathcal{C})$ distinct phrases. Assuming a uniform code over the dictionary entries, specifying each phrase requires $\log_2 |\mathcal{C}|$ bits to index. Therefore, their product yields the total bits required to encode the data given the dictionary. We select the batch dictionary by minimizing this objective:
\begin{equation}\label{eq:opt-dict}
\hat{\mathcal{C}}(\mathcal{B}^+) = \arg\min_{\mathcal{C}\in\mathcal{A}} \mathcal{L}_{\mathrm{BDL}}(\mathcal{B}^+,\mathcal{C})
\end{equation}
Equation~\eqref{eq:bdl} is the objective we \emph{minimize over $\mathcal{C}$} to extract the shared dictionary. The next subsection derives the corresponding \emph{per-trajectory} signal that the policy should optimize, by examining the structure of joint MDL minimization over $(\mathcal{C},\pi_\theta)$.

\subsection{EM-style Optimization of the MDL Objective: The segmentation cost}\label{sec:idio}

Jointly minimizing the description length over both the shared dictionary $\mathcal{C}$ and the policy $\pi_\theta$ naturally yields an EM-style alternating procedure: the current policy generates a batch of successful trajectories $\mathcal{B}^+$, from which we infer a compressive dictionary $\mathcal{C}$ (E-step); holding this dictionary fixed, we then update $\pi_\theta$ to favor successful trajectories that are cheaper to encode under $\mathcal{C}$ (M-step).

\paragraph{E-step (dictionary extraction).}
With $\pi_\theta$ fixed, we re-estimate the shared dictionary from the current successful batch $\mathcal{B}^+$ by solving Eq.~\eqref{eq:opt-dict}. The dictionary-storage term $L_{\mathrm{skill\_dict}}(\mathcal{C})/|\mathcal{B}^+|$ is constructive here: it controls \emph{which} shared structure we extract by trading dictionary size against per-trajectory segmentation cost.

\paragraph{M-step (policy update).}
With $\mathcal{C}$ fixed at $\mathcal{\mathcal{C}}(\mathcal{B}^+)$, we move $\pi_\theta$ toward successful trajectories whose skill sequences are cheap to encode under $\mathcal{C}$. Once $\mathcal{C}$ is fixed, the only part of the joint description length that varies with the individual trajectory is the conditional segmentation cost, and this is the quantity we attach to the per-trajectory RL signal.

Concretely, we use the \emph{segmentation cost} of a successful trajectory $\tau_i$ with skill sequence $s_i = \varphi(\tau_i)$ under dictionary $\mathcal{C}$ as.
\begin{equation}\label{eq:idio}
\mathrm{SegCost}(s_i \mid \mathcal{C}) := \frac{\mathrm{seg}(s_i,\mathcal{C})}{T}
\end{equation}
A trajectory that decomposes into a few long reusable phrases has $\mathrm{seg}(s,\mathcal{C}) \ll T$ and a lower segmentation cost; a trajectory composed of unrelated singleton skills has $\mathrm{seg}(s,\mathcal{C}) = |s| \approx T$ and the maximum segmentation cost of $1$. 

The augmented trajectory-level RL objective is therefore to maximize
\begin{equation}\label{eq:rl-mdl}
\begin{aligned}
&\mathop{\mathbb{E}}\limits_{\mathcal{B}\sim(\mathcal{D},\pi_\theta)^n}
\Biggl[
\frac{1}{n}\sum_{i=1}^{n}
\Bigl(
R(\tau_i) \\
&\quad -
\lambda\cdot \mathds{1}\{R(\tau_i)=1\}\,
\operatorname{SegCost}\bigl(
s_i \mid \mathcal{C}(\mathcal{B}^+)
\bigr)
\Bigr)
\Biggr]
\end{aligned}
\end{equation}
with $\lambda$ as the efficiency coefficient. The indicator restricts the penalty to successful trajectories, so the signal never discourages task completion. The full alternating procedure, i.e., E-step over $\mathcal{C}(\mathcal{B}^+)$, then M-step GRPO update over $\pi_\theta$, iteratively reduces the joint description length of the successful behavior the policy produces.


\subsection{Pure Round Length as Fixed-Code MDL}\label{sec:fixed-code}
Pure round-length penalties compress trajectories uniformly by discouraging all steps equally, regardless of whether those steps participate in reusable behavioral structure. In our framework, this corresponds to a degenerate fixed-code setting where the dictionary contains only singleton skills and is not allowed to merge into reusable phrases.

\begin{proposition}[Pure round length is the fixed-code segmentation cost]\label{prop:length}
Let $\mathcal{C}_\Sigma := \Sigma \in \mathcal{A}$ be the singleton-only dictionary. Then for every non-empty skill sequence $s$,
\[\mathrm{seg}(s, \mathcal{C}_\Sigma) = |s| \quad \text{and} \quad \mathrm{SegCost}(s \mid \mathcal{C}_\Sigma) = \frac{|s|}{T}.\]
For the singleton-only dictionary, segmentation cost is exactly pure round length. For any dictionary containing a useful non-singleton phrase, there exist equal-length sequences that receive different segmentation costs. Thus, raw length cannot, in general, distinguish reusable from non-reusable structure. (Proof in \S\ref{app:proofs}.)
\end{proposition}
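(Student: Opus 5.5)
The plan is to prove the two claims separately: the exact formula for the singleton-only dictionary, then an explicit construction showing that length cannot separate reusable from non-reusable structure.

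\textbf{First claim.} Every phrase in $\mathcal{C}_\Sigma$ has length exactly $1$. Any exact cover of $s$ by $k$ phrases from $\mathcal{C}_\Sigma$ therefore has total length $k$. Exactness forces $k=|s|$, so every feasible segmentation uses exactly $|s|$ phrases.

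Feasibility holds because $s\in\Sigma^*$, so the segmentation into individual symbols is valid. The minimum defining $\mathrm{seg}$ is thus taken over a nonempty set on which the count is constantly $|s|$, which gives $\mathrm{seg}(s,\mathcal{C}_\Sigma)=|s|$. Dividing by $T$ as in Eq.~\eqref{eq:idio} gives $\mathrm{SegCost}(s\mid\mathcal{C}_\Sigma)=|s|/T$, which is the normalized round length.

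\textbf{Second claim.} I interpret a ``useful non-singleton phrase'' as some $p\in\mathcal{C}$ with $|p|=\ell\ge 2$. I will construct two sequences of length $\ell$ that receive different costs.

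Let $s_1=p$. Then $\mathrm{seg}(s_1,\mathcal{C})=1$, because $p$ alone covers $s_1$ and $\mathrm{seg}\ge 1$ for nonempty sequences.

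For $s_2$, I want a length-$\ell$ sequence with $\mathrm{seg}(s_2,\mathcal{C})\ge 2$, meaning $s_2\notin\mathcal{C}$. The main obstacle is guaranteeing that such a sequence exists: a priori $\mathcal{C}$ might contain every string of length $\ell$. I will handle this in two steps.

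\begin{itemize}
\item If $K_\Sigma\ge 2$, there are $K_\Sigma^{\ell}$ strings of length $\ell$. A dictionary that contains all of them is conceivable, but then I can pick a different witness. Choose $\ell^\ast\le L$ to be a length at which $\mathcal{C}$ contains some but not all strings of length $\ell^\ast$. Such a length exists unless $\mathcal{C}$ contains either all or none of the strings at each length.
\item In that remaining degenerate case, segmentation cost depends only on length. I will exclude it by reading ``useful'' as ``$\mathcal{C}$ does not contain all strings of length $|p|$''. This is the natural meaning, since a phrase is useful only if it distinguishes some sequences from others.
\end{itemize}

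Under that reading, take any $s_2\in\Sigma^{\ell}\setminus\mathcal{C}$. Its segmentation needs at least two phrases, so $\mathrm{seg}(s_2,\mathcal{C})\ge 2>1=\mathrm{seg}(s_1,\mathcal{C})$. Hence $|s_1|=|s_2|$ while $\mathrm{SegCost}(s_1\mid\mathcal{C})<\mathrm{SegCost}(s_2\mid\mathcal{C})$.

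\textbf{Conclusion.} By the first claim, the round-length penalty coincides with the segmentation cost under $\mathcal{C}_\Sigma$, and so it assigns $s_1$ and $s_2$ identical cost. A length-based penalty therefore cannot reproduce the distinction that any such $\mathcal{C}$ draws, which is the final assertion of the proposition.
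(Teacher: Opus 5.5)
Your proof is correct. The first claim is handled as in the paper: the per-symbol decomposition is the only exact cover by length-one phrases.

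For the second claim you take a different route. The paper argues from one explicit instance. It takes $\Sigma=\{\sigma_1,\sigma_2,\sigma_3\}$ and $\mathcal{C}=\Sigma\cup\{(\sigma_1,\sigma_2)\}$, with the length-$T$ sequences $(\sigma_1,\sigma_2)^{T/2}$ and $(\sigma_1,\sigma_3,\sigma_2,\sigma_3)^{T/4}$, whose $\mathrm{seg}$ values are $T/2$ and $T$. It then simply asserts the conclusion for every other admissible dictionary.

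You instead work with an arbitrary $\mathcal{C}\in\mathcal{A}$. You use short witnesses of some length $\ell^\ast\le L$: one in $\mathcal{C}$ (cost $1/T$) and one not in $\mathcal{C}$ (cost at least $2/T$). This combines with your degenerate-case observation: if each $\mathcal{C}\cap\Sigma^{\ell}$ is empty or all of $\Sigma^{\ell}$, then $\mathrm{seg}(s,\mathcal{C})$ is the fewest parts in a composition of $|s|$ into lengths present in $\mathcal{C}$, hence a function of $|s|$ alone. Together these give an exact characterization: equal-length sequences with different costs exist if and only if some $\mathcal{C}\cap\Sigma^{\ell}$ is a nonempty proper subset of $\Sigma^{\ell}$.

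That characterization is what actually justifies the universal quantifier. It also shows the word ``useful'' is doing real work. For $\mathcal{C}=\Sigma\cup\Sigma^{2}$ one gets $\mathrm{seg}(s,\mathcal{C})=\lceil |s|/2\rceil$, so the paper's closing sentence is false when read literally as applying to every non-singleton admissible dictionary.

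What the paper's example buys in return is magnitude. Its two sequences have full horizon length and costs $1/2$ versus $1$, a constant-fraction gap. Your witnesses are only guaranteed to differ by $1/T$.
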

The proposition has a direct empirical consequence. Uniform brevity cannot distinguish sequences with the same raw length but different segmentation costs and fail to favor reusable compositional structure across different successful trajectories.

\subsection{Generalization Bound on the MDL Objective}\label{sec:theory}

We now theoretically analyze the MDL objective utilized in the EM-style optimization of \S\ref{sec:idio} by deriving a PAC-Bayes bound on the expected description length of successful skill sequences. Importantly, this theorem bounds the expected description length under a fixed policy distribution, rather than characterizing policy generalization after iterative RL updates. The argument guarantees that if an extracted dictionary tightly compresses an observed successful batch, its expected description length on future successful trajectories from the same distribution is formally bounded. While this does not alone prove that the per-trajectory penalty acts as a generalization-improving regularizer, it formally justifies using the empirical MDL objective as a reliable measure for evaluating the structural reusability of successful behaviors.

For a given $\pi_\theta$, let $Q_\theta^+$ denote the distribution over successful skill sequences induced by $d \sim \mathcal{D}$, $\tau \sim \pi_\theta(\cdot \mid d)$, $s = \varphi(\tau)$, and $R(\tau) = 1$.

\begin{assumption}[Compositional skill structure]\label{asm:compositional}
The successful skill-sequence distribution $Q_\theta^+$ admits a shared reusable decomposition: there exists a dictionary $\mathcal{C}^\star \in \mathcal{A}$ such that (i)~the dictionary itself has bounded complexity, $L_{\mathrm{skill\_dict}}(\mathcal{C}^\star) \leq B^\star$; and (ii)~every successful skill sequence $s$ in the support of $Q_\theta^+$ can be segmented using at most $K^\star$ phrases from $\mathcal{C}^\star$, i.e., $\mathrm{seg}(s,\mathcal{C}^\star) \leq K^\star$. Here $B^\star$ and $K^\star$ are distribution-dependent constants determined by the task family, skill alphabet, and phrase-length cap; they do not scale with the sample size.
\end{assumption}

This assumption states that successful reasoning trajectories are generated by composing a bounded number of reusable skill patterns from a shared vocabulary, a condition naturally satisfied in structured environments such as ALFWorld, where diverse household tasks decompose into common subroutines (e.g., \textsc{Explore} $\to$ \textsc{Take} $\to$ \textsc{Transform} $\to$ \textsc{Deliver}).

\begin{theorem}[Trajectory-level MDL generalization bound]\label{thm:main}
Let $P(\mathcal{C}) \propto 2^{-L_{\mathrm{skill\_dict}}(\mathcal{C})}$, $\mathcal{C} \in \mathcal{A}$, be a prior over admissible dictionaries fixed before observing any successful training sample. 
Draw $m \geq 1$ i.i.d.\ successful skill sequences $S_m = (s_1, \ldots, s_m) \sim (Q_\theta^+)^m$, and let $\widehat{\mathcal{C}} = \widehat{\mathcal{C}}(S_m)$ be any dictionary extracted from this successful sample. To normalize the per-trajectory MDL loss, define
\[
\begin{aligned}
U_m(\mathcal{C})
&= \frac{L_{\mathrm{skill\_dict}}(\mathcal{C})}{m}
   + T\log_2|\mathcal{C}|, \\[0.5em]
\widetilde{\ell}_m(s,\mathcal{C})
&= \frac{
      L_{\mathrm{skill\_dict}}(\mathcal{C})/m
      + \operatorname{seg}(s,\mathcal{C})\log_2|\mathcal{C}|
   }{
      U_m(\mathcal{C})
   }
\end{aligned}
\]
Then $\widetilde{\ell}_m(s,\mathcal{C}) \in [0,1]$ for every non-empty successful skill sequence $s$. Moreover, for any $\delta > 0$, with probability at least $1 - \delta$ over the draw of $S_m$,
\begin{equation}\label{eq:gen-bound}
\begin{aligned}
\mathbb{E}_{s \sim Q_\theta^+}
&\!\left[\widetilde{\ell}_m(s,\widehat{\mathcal{C}})\right]
\leq
\frac{1}{m}\sum_{i=1}^{m}
\widetilde{\ell}_m(s_i,\widehat{\mathcal{C}}) \\
&+
\sqrt{
\frac{
L_{\mathrm{skill\_dict}}(\widehat{\mathcal{C}})\ln 2
+\ln(m/\delta)
}{
2m
}
}
\end{aligned}
\end{equation}
The expected normalized MDL of the extracted dictionary on a fresh successful skill sequence drawn from $Q_\theta^+$ is therefore bounded by its empirical normalized MDL on the observed successful sample, plus a complexity term that decreases with $m$ and increases with the description length of $\widehat{\mathcal{C}}$. The proof, together with a corollary translating $B^\star$ and $K^\star$ into an explicit bound on future description length under $\mathcal{C}^\star$, is provided in \S\ref{app:proofs}.
\end{theorem}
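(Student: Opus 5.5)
The proof has two parts: the range claim $\widetilde{\ell}_m\in[0,1]$, then the concentration bound. For the range, fix $\mathcal{C}\in\mathcal{A}$ and a non-empty successful $s$. Every term in the numerator is nonnegative, so $\widetilde{\ell}_m\ge 0$. For the upper bound, use $\mathrm{seg}(s,\mathcal{C})\le |s|\le T$ from \S\ref{sec:setup} together with $\log_2|\mathcal{C}|\ge 0$. This gives numerator $\le L_{\mathrm{skill\_dict}}(\mathcal{C})/m + T\log_2|\mathcal{C}| = U_m(\mathcal{C})$. We also need $U_m(\mathcal{C})>0$ so the ratio is defined. This holds because $\Sigma\subseteq\mathcal{C}$ makes $\mathcal{C}$ non-empty. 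Each phrase costs at least $\log_2 K_\Sigma+\log_2 L$, which is positive whenever $K_\Sigma\ge2$ or $L\ge 2$. In the fully degenerate case we adopt the convention $0/0:=0$, and the claim is trivial there.

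For the bound, I would not use a full PAC-Bayes KL argument. I would use its deterministic special case, the Occam bound, which is a union bound weighted by the prior $P$. Fix $\mathcal{C}\in\mathcal{A}$ \emph{before} seeing data. Then $\widetilde{\ell}_m(s_1,\mathcal{C}),\dots,\widetilde{\ell}_m(s_m,\mathcal{C})$ are i.i.d.\ in $[0,1]$. Note that $m$ is fixed, so $\widetilde{\ell}_m$ does not depend on the sample. Hoeffding's inequality gives, for any $\epsilon>0$,
\[
\Pr\Bigl[\mathbb{E}_{Q_\theta^+}\widetilde{\ell}_m(s,\mathcal{C})-\tfrac1m\textstyle\sum_i\widetilde{\ell}_m(s_i,\mathcal{C})>\epsilon\Bigr]\le e^{-2m\epsilon^2}.
\]
Allocate confidence $\delta_{\mathcal{C}} := \delta\,P(\mathcal{C})/m$ to each dictionary, and choose $\epsilon_{\mathcal{C}}=\sqrt{\ln(1/\delta_{\mathcal{C}})/(2m)}$. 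A union bound over the countable (indeed finite) set $\mathcal{A}$ bounds the total failure probability by $\sum_{\mathcal{C}}\delta P(\mathcal{C})/m\le\delta$. On the complementary event the inequality holds simultaneously for all $\mathcal{C}$, and in particular for the data-dependent $\widehat{\mathcal{C}}$. This step is what licenses an arbitrary extraction rule.

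It remains to express $\ln(1/\delta_{\mathcal{C}})$ in the stated form. We have $P(\mathcal{C})=2^{-L_{\mathrm{skill\_dict}}(\mathcal{C})}/Z$ with $Z=\sum_{\mathcal{C}'\in\mathcal{A}}2^{-L_{\mathrm{skill\_dict}}(\mathcal{C}')}$. So $\ln(1/\delta_{\mathcal{C}}) = L_{\mathrm{skill\_dict}}(\mathcal{C})\ln 2+\ln Z+\ln(m/\delta)$. I expect the normalizer $Z$ to be the main obstacle: the bound as stated has no $\ln Z$ term, so I must show $Z\le 1$, i.e.\ a Kraft-type inequality for $L_{\mathrm{skill\_dict}}$. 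My first attempt is to factor $Z$ over phrases. The set $\mathcal{A}$ consists of sets $\mathcal{C}\supseteq\Sigma$ of phrases in $\Sigma^{\le L}$, and $L_{\mathrm{skill\_dict}}$ is additive over phrases. Hence
\[
Z\le\prod_{p\in\Sigma^{\le L}}\bigl(1+2^{-|p|\log_2K_\Sigma-\log_2 L}\bigr)=\prod_{p}\bigl(1+K_\Sigma^{-|p|}/L\bigr),
\]
with the singleton factors forced in (contributing only their weight, not $1+\cdot$). Within each length $\ell$ there are $K_\Sigma^\ell$ phrases, so the total weight is $\sum_{\ell=1}^L K_\Sigma^\ell K_\Sigma^{-\ell}/L=1$. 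The forced singletons alone contribute the factor $(1/(LK_\Sigma))^{K_\Sigma}\le 1$, but the optional phrases give a product of up to $e^{1}$, so $Z\le 1$ is not automatic. If it fails, I would do one of two things. One option is to read the prior as a sub-probability code, interpreting $L_{\mathrm{skill\_dict}}$ as a prefix code length in which the $\log_2 L$ term encodes phrase length and an implicit stop bit is absorbed. The other is to bound $\ln Z\le 1$ explicitly, or absorb it into $\ln(m/\delta)$ by using the slack of the $1/m$ factor. In either case the final display follows by substituting $\widehat{\mathcal{C}}$ for $\mathcal{C}$.
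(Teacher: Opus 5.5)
Your overall route differs from the paper's and is valid. You apply Hoeffding to each fixed $\mathcal{C}\in\mathcal{A}$ and then take a union bound weighted by the prior (the Occam bound). The paper instead invokes the kl-form PAC-Bayes bound with a point-mass posterior on $\widehat{\mathcal{C}}$, uses $\mathrm{KL}(Q\|P)=-\ln P(\widehat{\mathcal{C}})$, and finishes with Pinsker. For a finite $\mathcal{A}$ and a point-mass posterior the two are essentially the same argument. Yours is more elementary and does not depend on the exact constant in the cited kl bound; allocating $\delta P(\mathcal{C})$ instead of $\delta P(\mathcal{C})/m$ would even give $\ln(1/\delta)$ in place of $\ln(m/\delta)$. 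Your range argument is fine, and it is more careful than the paper's about $U_m(\mathcal{C})>0$.

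The gap is the normalizer. You correctly see that the stated bound requires $Z\le 1$, but you leave it open, and none of your fallbacks gives the theorem as stated:
\begin{itemize}
\item Bounding $\ln Z\le 1$ proves a weaker inequality with an extra additive term.
\item Absorbing $\ln Z$ into the $\ln m$ slack needs $Z\le m$, which at $m=1$ is the very claim in question.
\item Appealing to Kraft for an implicit prefix code either presupposes $Z\le 1$ or changes $L_{\mathrm{skill\_dict}}$, which as defined has no term encoding the number of phrases $M$.
\end{itemize}

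The missing step follows directly from your own factorization, which is in fact an equality. The optional phrases are only those of lengths $2,\dots,L$, so their total weight is $\sum_{\ell=2}^{L}K_\Sigma^{\ell}\cdot K_\Sigma^{-\ell}/L=(L-1)/L$, not $1$. With $\ln(1+x)\le x$ this gives $\ln Z\le -K_\Sigma\ln(LK_\Sigma)+(L-1)/L$. Since $K_\Sigma\ge 1$, we have $K_\Sigma\ln(LK_\Sigma)\ge\ln L\ge 1-1/L$ by $\ln x\ge 1-1/x$, hence $\ln Z\le 0$. This is exactly the paper's computation.

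Even your cruder factor $e$ suffices whenever $K_\Sigma\ge 2$, because then $(LK_\Sigma)^{K_\Sigma}\ge 4>e$. The sharper $(L-1)/L$ is needed only for $K_\Sigma=1$. With $Z\le 1$ established, your union-bound argument closes and yields the stated inequality.
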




\subsection{SkillRL and ERL as Implicit MDL Minimizers}\label{sec:implicit-mdl}

Recent skill-reuse frameworks implicitly compress different parts of the behavioral pipeline but do not explicitly optimize the shared successful-trajectory objective in Eq.~\eqref{eq:rl-mdl}. SkillRL~\citep{xia2026skillrl} constructs a hierarchical skill library via teacher-model distillation, effectively compressing the dictionary term $L_{\mathrm{skill\_dict}}$ but not penalizing per-trajectory segmentation complexity during RL updates. ERL~\citep{shi2026experiential} embeds an experience--reflection--consolidation loop that compresses instance-level correction traces into model weights but does not enforce shared compositional patterns across trajectories. In contrast, \methodname{} explicitly minimizes the full batch-level MDL objective by the EM process, simultaneously searching for a compact shared codebook and incentivizing reusable skill compositions among successful trajectories. Formal objective comparisons for these two methods are provided in \S\ref{app:implicit-mdl}.

\subsection{Algorithm and Implementation}\label{sec:algorithm}

We implement \methodname{} as a reward-shaping module within the GRPO training loop. Algorithm~\ref{alg:reuserl} in the appendix summarizes the procedure.

\paragraph{Atomic skill extraction.} Given a successful trajectory $\tau$, we extract its skill sequence $\varphi(\tau) \in \Sigma^*$ via a deterministic rule-based mapping over the raw action verbs. We intentionally choose a small, task-generic alphabet to capture reusable structure without collapsing into near-raw actions or losing critical behavioral distinctions. Mapping rules are summarized in \S\ref{sec:skill_extraction}.

\paragraph{BPE dictionary extraction with a global success buffer.}
We approximate the optimal dictionary $\mathcal{C}$ by a greedy Byte Pair Encoding (BPE) algorithm~\citep{10.5555/177910.177914}. Starting from singleton phrases (one per atomic skill in $\Sigma$), we repeatedly propose an adjacent phrase pair $(p_a, p_b)$ for merging, where candidate pairs are ranked by frequency in the successful cluster $\mathcal{T}$, which consists of the current successful batch $\mathcal{B}^+$ and an optional global successful buffer $\mathcal{G}$. The global buffer, collecting successful trajectories across batches in a FIFO structure, is to stabilize the E-step across small successful batches. A candidate merge is accepted only if adding the concatenated phrase $p_a \circ p_b$ strictly decreases the batch objective:
\[
\begin{aligned}
\Delta_{\mathrm{BDL}}
&=
\mathcal{L}_{\mathrm{BDL}}\!\left(
  \mathcal{T}_b,
  \mathcal{C}\cup\{p_a\circ p_b\}
\right) \\
&\quad
-
\mathcal{L}_{\mathrm{BDL}}\!\left(
  \mathcal{T}_b,\mathcal{C}
\right)
< 0
\end{aligned}
\]
We cap phrase length at $L = 4$ to avoid overfitting to specific trajectory-level details. This greedy procedure approximates the NP-hard optimal dictionary search~\citep{kozma2024theoretical} and terminates when no merge yields a negative MDL delta, which is further validated in \S\ref{sec:ablations}.


\section{Experimental Settings}\label{sec:experiments}
\subsection{Tasks}
We evaluate \methodname{} on three text-based agentic benchmarks: \textbf{ALFWorld}~\citep{shridhar2021alfworld} for household interaction tasks, \textbf{TextWorld-Cooking}~\citep{cote2018textworld} for recipe-following with irreversible processing errors, and \textbf{Countdown-Stepwise}~\citep{stojanovski2026reasoning} for arithmetic planning with rollback and reset actions. More details are in \S\ref{appendix:experiment_details}.

\subsection{Atomic Skill Extraction}\label{sec:skill_extraction}
Based on each task, we manually define deterministic rule-based skill projection over raw actions. ALFWorld uses 5 navigation/manipulation skills, TextWorld-Cooking uses 10 recipe-execution skills, and Countdown-Stepwise uses 26 target-relative arithmetic skills. These skill alphabets are empirically designed verb-effect equivalence classes that preserve behaviorally meaningful distinctions while abstracting away low-level action variations. A finer alphabet approaches the raw action space and reduces abstraction, whereas a coarser alphabet merges strategically distinct behaviors that are important for solving the task. More details are in \S\ref{app:skills}.
\subsection{Training and Evaluation}
All experiments are trained with GRPO for 300 steps. We use Qwen2.5-1.5B-Instruct \citep{qwen2.5} for ALFWorld and Countdown-Stepwise, and Qwen3-1.7B \citep{yang2025qwen3} in non-thinking mode for TextWorld-Cooking. For ALFWorld, we report SC@7~\citep{wang2023selfconsistency}: for each test scene, we sample $7$ rollouts and determine success by majority vote. It has two test splits: \textbf{IID} and \textbf{OOD}. For TextWorld-Cooking and Countdown-Stepwise, we report Pass@1 averaged over three independent runs, reporting the mean and standard deviation across seeds. More implementation and evaluation details are in \S\ref{appendix:experiment_details}.

\subsection{Compared Methods}
We compare four configurations per environment: (1)~\textbf{Vanilla}: the base model without any RL training; (2)~\textbf{Vanilla GRPO}: standard GRPO without any compression signal; (3)~\textbf{pure-round-length}: GRPO augmented with the penalty $\cdot|s|/T$ on successful trajectories, which Proposition~\ref{prop:length} shows is the fixed-code degenerate instance of \methodname{}; (4)~\textbf{\methodname{}-SegCost (Ours)}: GRPO augmented with the penalty $seg(s, \mathcal{C})/T$, where the dictionary $\mathcal{C}$ is extracted from the successful batch combined with the global buffer.

\section{Main Results}\label{sec:main}


\begin{table*}[t]
  \centering
  \small
  \begin{tabular*}{\textwidth}{@{\extracolsep{\fill}}lcccc@{}}
    \toprule
    & \multicolumn{2}{c}{\textbf{ALFWorld} (SC@7)} & \textbf{TW-Cooking} & \textbf{Countdown-Stepwise} \\
    \cmidrule(lr){2-3}
    \textbf{Method} & IID & OOD & Pass@1 & Pass@1 \\
    \midrule
    Vanilla            & 3.57  & 1.49  & 6.80\,$\pm$\,1.14  & 20.57\,$\pm$\,0.98 \\
    Vanilla GRPO       & 84.29 & 79.85 & 74.97\,$\pm$\,0.76 & 68.46\,$\pm$\,0.45 \\
    Pure Round-Length  & 96.43 & 91.79 & 64.03\,$\pm$\,0.25 & 77.02\,$\pm$\,0.96 \\
    \midrule
    \methodname{}-SegCost (no buffer) & 93.57 & 89.55 & \textbf{83.50\,$\pm$\,0.10} & 68.94\,$\pm$\,0.36 \\
    \methodname{}-SegCost             & \textbf{97.14} & \textbf{93.28} & 81.73\,$\pm$\,0.23 & \textbf{80.37\,$\pm$\,0.33} \\
    \bottomrule
  \end{tabular*}
  \caption{\small Main results. Bold marks the best score per column, attained by a \methodname{}-SegCost variant on every benchmark. Both variants are contributions of this work, and the unmarked \methodname{}-SegCost uses the global success buffer, whose role we analyze in \S\ref{sec:ablations}. ALFWorld reports SC@7 on IID and OOD splits of 140 and 134 scenes. TW-Cooking and Countdown-Stepwise report Pass@1 over 1000 and 1024 test instances as mean $\pm$ std across three seeds.}
  \label{tab:main}
  \vspace{-1em}
\end{table*}

\paragraph{Structural compression translates to improved reasoning generalization.} 
Crucially, the gains in Table~\ref{tab:main} demonstrate that \methodname{} improves beyond vanilla GRPO and the uniform compression. The strongest evidence is its out-of-distribution and compositional transfer: \methodname{} achieves its largest absolute improvement on the ALFWorld OOD split and held-out data from TW-Cooking and Countdown-Stepwise.

\paragraph{Compression robustly beats GRPO in environments dominated by step-budget exhaustion.}
On ALFWorld and Countdown-Stepwise, all compression methods outperform vanilla GRPO by large margins ($+11.94$ / $+8.56\% $ for pure-round-length on ALFWorld OOD / Countdown Pass@1, and another $+1.49$ / $+3.35\% $ for segmentation cost), because the intermediate states are highly reversible by actions. Invalid or suboptimal actions largely waste the step budget rather than triggering immediate fatal traps. This means any structurally consistent compression signal is informative.

\paragraph{Uniform length pressure backfires when redundant actions trigger irreversible failures.}
On TextWorld-Cooking, pure-round-length drops significantly, $10.94$ points even below vanilla GRPO. After the case study, failures concentrate on \emph{\texttt{cook=False} + cookbook-says-cook} recipes, whose \texttt{cook} gate is off because the ingredient arrives already in cooked form (e.g.\ \texttt{roasted red apple} on the counter) even though the cookbook still mentions the cooking of the ingredients, so attempting to cook the ingredient a second time burns it. More analysis about it is detailed in \S\ref{sec:case-cooking}.

\paragraph{A learned multi-step dictionary recovers task-conditional structure.}
\methodname{}-SegCost improves over pure-round-length on all three benchmarks ($+0.71$/$+1.49$\% ALFWorld IID/OOD, $+17.70$\% TW-Cooking, $+3.35$\% Countdown). The TW-Cooking gap is largest because pure-round-length minimizes raw step count and falls into the brittle cookbook-literal shortcut, whereas the segmentation cost rewards skill subsequences that recur across successful trajectories and so favors the recipe-faithful macro that generalizes. The strict empirical ordering \methodname{}-SegCost $\succ$ pure-round-length $\succ$ Vanilla GRPO holds on every benchmark except the middle term on TW-Cooking, where pure-round-length falls below GRPO. This inversion, predicted by Proposition~\ref{prop:length}, shows that effective compression must capture structure rather than raw length.

\section{Ablations}\label{sec:ablations}

\subsection{Effectiveness of Greedy BPE Dictionary Extraction}

The shared dictionary objective $\mathcal{L}_{\mathrm{BDL}}$ (Eq.~\ref{eq:bdl}) is NP-hard to minimize exactly over the admissible class $\mathcal{A}$, so the E-step in Algorithm~\ref{alg:reuserl} uses a greedy BPE merge schedule. On a synthetic corpus of dummy skill sequences, this schedule comes within $0.14\%$ of the matched brute-force optimum, recovers $99.02\%$ of its non-singleton phrases, and runs $285\times$ and $2231\times$ faster than the two brute-force variants. The greedy upper bound is thus tight enough that minimizing it inline effectively reduces $\mathcal{L}_{\mathrm{BDL}}$ over $\mathcal{A}$ at a cost low enough to run alongside every GRPO update. Full protocol and baselines are in \S\ref{app:bpe-merging}.

\subsection{Global Success Buffer}
Table~\ref{tab:main} reports an ablation without the global success buffer. The buffer is beneficial when successful trajectories within a single batch are too sparse or diverse for BPE to identify stable reusable phrases. Removing it lowers ALFWorld SC@7 from $97.14$/$93.28$\% to $93.57$/$89.55$\% on IID/OOD, and drops Countdown Pass@1 from $80.37$\% to $68.94$\%. However, the buffer can be harmful when dominated by easy trajectory patterns. On TextWorld-Cooking, removing the buffer raises TW-Cooking Pass@1 from $81.73$\% to $83.50$\%, entirely on the hard split ($9.0\%\!\to\!18.5\%$ solved, while simple games saturate at $99.8\%$ either way). A persistent buffer over-represents majority-class macros from simple games, pruning the interleaved cut-and-cook structures required for hard recipes, whereas a thinner per-batch corpus preserves this minority structure. Overall, the global success buffer serves as an optional E-step stabilizer: it improves phrase discovery when per-batch successes are sparse, but may hinder learning under highly imbalanced task distributions where majority patterns suppress minority structures. More details are in \S\ref{app:ablation_global_buffer}.

\subsection{Additional Ablations on Skill Projection and Model Scale}
To further validate our conclusion that the observed performance gains stem from MDL-based compression over a reasonable abstraction, we conduct additional ablation studies. Specifically, we evaluate a random skill projection baseline (\S\ref{app:skill_projection}) and extend our experiments to larger models (\S\ref{app:larger_models}).

\section{Case Study}\label{sec:case-study}

We now examine how environment structure shapes the behavior induced by different compression objectives. Table~\ref{tab:case-behavior} reports trajectory-level statistics at training step~300 that reveal failure modes and behavioral trade-offs beyond success rate alone.

\begin{table}[t]
\centering
\footnotesize
\setlength{\tabcolsep}{4.5pt}
\renewcommand{\arraystretch}{1.15}
\begin{tabularx}{\columnwidth}{@{}>{\raggedright\arraybackslash}X|cccc@{}}
\toprule
\textbf{Metric}
& \textbf{GRPO}
& \makecell[c]{\textbf{Pure}\\\textbf{Len.}}
& \makecell[c]{\textbf{SegCost}\\\textbf{(buf)}}
& \makecell[c]{\textbf{SegCost}\\\textbf{(no-buf)}} \\
\cmidrule(lr){2-5}
\multicolumn{5}{@{}l}{\textbf{ALFWorld (IID successes)}} \\
Success length, steps & 15.3 & 8.7 & 9.3 & \textbf{8.4} \\
Invalid, \%           & 23.5 & \textbf{1.4} & 1.8 & 3.9 \\
Repeat, \%            & 40.4 & \textbf{9.4} & 14.4 & 9.9 \\
\midrule
\multicolumn{5}{@{}l}{\textbf{TW-Cooking (all episodes)}} \\
Success length, steps & 12.1 & \textbf{8.2} & 9.0 & 8.8 \\
Burned failures, \%   & 21.1 & \textbf{74.0} & 3.3 & 9.7 \\
\midrule
\multicolumn{5}{@{}l}{\textbf{Countdown-Stepwise (all episodes)}} \\
0-\textsc{Reset} success,~\%   & 58.0 & 82.0 & 80.3 & \textbf{99.9} \\
\textsc{Mul}/\textsc{Div} OPs, \% & 3.1  & 6.8  & \textbf{10.9} & 5.5 \\
\bottomrule
\end{tabularx}
\caption{\small Combined non-SR behavioral signals at training step~300. ``Invalid'' (ALFWorld) is the fraction of successful actions returning ``Nothing happens.'' ``Repeat'' is the fraction of repeated actions within an episode. ``Burned'' marks TW-Cooking failures containing ``burned.'' Countdown-Stepwise failures are either timeouts (30-step limit) or stuck episodes caused by invalid or unparsable actions.
}

\vspace{-1em}
\label{tab:case-behavior}
\end{table}

\subsection{ALFWorld: Efficiency-Driven Gains over GRPO}\label{sec:case-alfworld}
ALFWorld's only failure mode is timeout, so step budget is the binding constraint; GRPO leaks it to invalid loops and action repetition, and both \methodname{} variants cut average successful-episode length from $15.3$ to $\sim\!9$ steps. The residual SegCost-vs-round-length gap localizes to complex multi-object tasks (\S\ref{app:case-study}); several high-frequency phrases recovered by \methodname{}-segmentation cost coincide with skills distilled in SkillRL's Skillbank~\citep{xia2026skillrl}, with the correspondence listed in Table~\ref{tab:skill-mapping}, \S\ref{app:case-study}.

\subsection{TextWorld-Cooking: The Burned-Food Failure of Pure Round Length}\label{sec:case-cooking}
Pure-round-length collapses onto the \texttt{Burned} failure mode ($74\%$ of its failures vs.\ $21\%$ for GRPO, Table~\ref{tab:case-behavior}) with the shortest failed trajectories of any method ($10.6$ steps), while \methodname{}-SegCost achieves comparable successful-episode efficiency with substantially fewer burn failures.
The mechanism---cookbook-literal \texttt{cook} on already-cooked ingredients---is policy-side for failing to learn the skill structures over different successful trajectories and triggering the biases of emitting the cook actions. We note that this failure mode was identified post hoc, through error
analysis of round-length training runs on TW-Cooking, rather than selected
in advance. More details about the failing are in \S\ref{app:ablation_global_buffer}, and Table~\ref{tab:tw-mechanism}, \S\ref{app:case-study}.

\subsection{Countdown-Stepwise: Solver-Template Formation and Operation Diversity}\label{sec:case-countdown}
Compression accelerates straight-through (zero-\textsc{Reset}) solving and uniquely expands \textsc{Mul}/\textsc{Div} usage under \methodname{}-SegCost, the signature of a learned dictionary that captures multi-step solver templates (e.g., executing a large subtraction to approach the target followed immediately by a precise division to clear the remainder). Per-method closing-OP counts and the full 
failure-mode partition are in \S\ref{app:ablation_global_buffer} and \S\ref{app:case-study}.

\section{Conclusion}\label{sec:conclusion}
We presented \methodname{}, augmenting GRPO with a segmentation cost $seg(s, \mathcal{C})/T$ derived from an EM-style MDL objective. By extracting a shared dictionary of reusable skills and penalizing idiosyncratic, one-off action sequences, \methodname{} regularizes against reasoning collapse such as wasting budgets and overfitting to certain curriculum in a task. This compression forces the agent to internalize transferable logic rather than memorize task-specific shortcuts, directly improving reasoning. Empirically, this generalization is evidenced by \methodname{}'s significant gains on the ALFWorld OOD split, its avoidance of deceptive traps in TextWorld-Cooking, and its systematic formation of multi-step solver templates in Countdown-Stepwise.

\section*{Limitations}

\methodname{} has four main limitations. \textbf{(i)}~The skill projection $\varphi$ is a rule-based per-environment mapping over action verbs; transferring \methodname{} to a new domain requires a small amount of manual schema design to abstract the atomic skills of completing the task (although the mapping itself is short and easily verifiable, and we consider this trade-off favorable to relying on an external LLM parser). \textbf{(ii)}~The greedy BPE dictionary search is an approximation to the NP-hard optimal-dictionary problem~\citep{kozma2024theoretical}; it is sufficient for the alphabets considered here, but the approximation gap is expected to grow with $|\Sigma|$ and may require alternative search heuristics in larger-vocabulary domains. \textbf{(iii)}~The generalization argument relies on Assumption~\ref{asm:compositional}: successful trajectories must admit a shared reusable decomposition with bounded dictionary complexity and bounded per-trajectory segment count. The assumption is a mild structural condition for the agentic benchmarks studied here, but can fail in unstructured or purely-novel reasoning domains where successful trajectories share less reusable substructure, in which case the segmentation cost signal degenerates toward the round-length penalty. \textbf{(iv)}~Our experiments are restricted to text-based agentic benchmarks and $1.5$--$1.7$B-parameter base models; extending \methodname{} to vision-language agents and to larger backbones is left to future work.

\section*{Use of LLMs}
We have NOT used LLMs to originate research ideas, NOT used LLMs to write original content in the paper, NOT used LLMs to generate data or plots.

\bibliography{custom}

\appendix
\begin{figure*}[t]
\centering
\includegraphics[width=\textwidth]{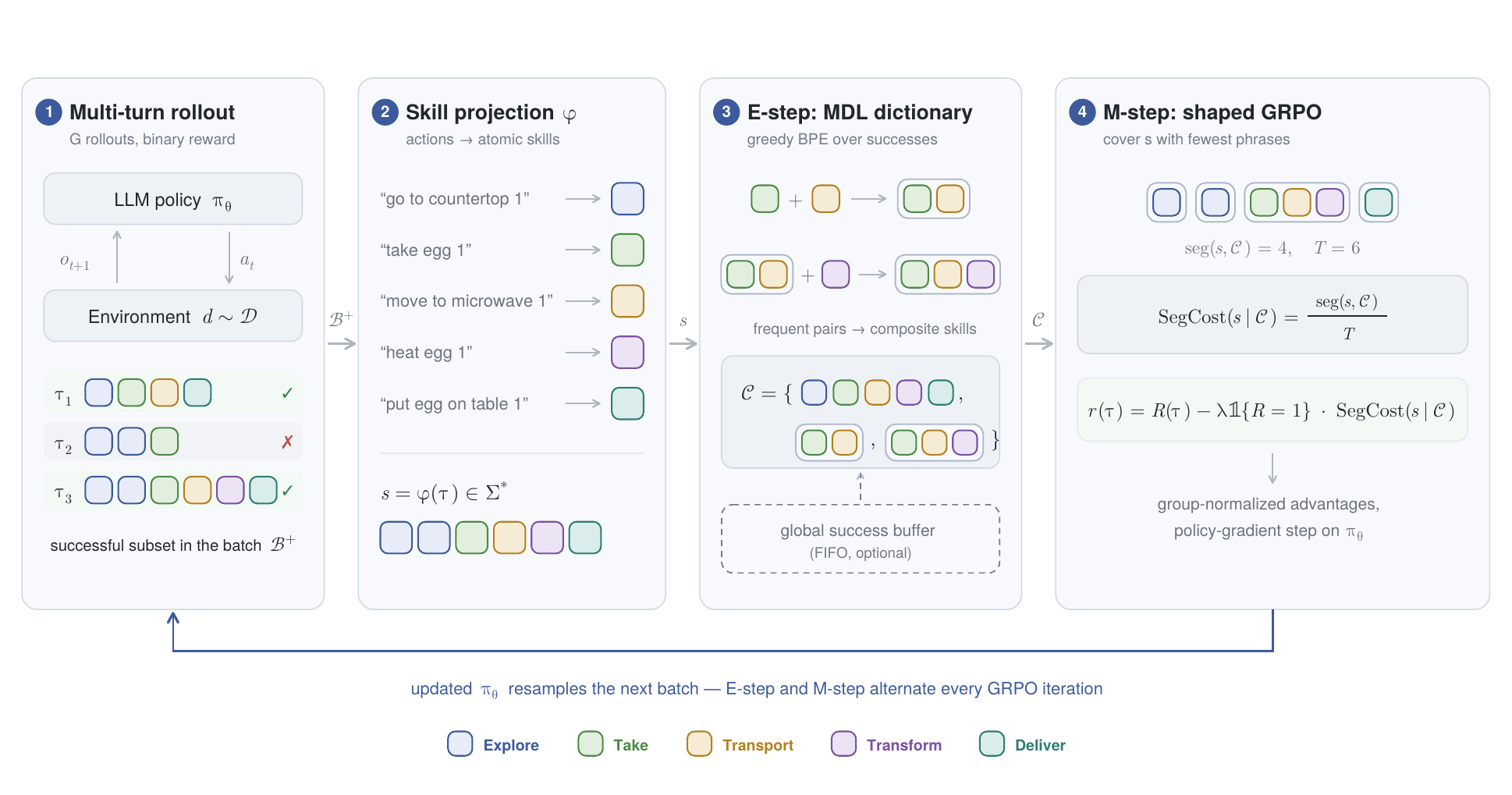}
\caption{\small \textbf{Overview of \methodname{}, illustrated in ALFWorld.} Colored boxes are the same five ALFWorld atomic skills as in Figure~\ref{fig:teaser}. \textbf{(1)}~The policy rolls out $G$ trajectories per task and receives only a binary terminal reward; the successful subset $\mathcal{B}^{+}$ is retained. \textbf{(2)}~A fixed rule-based projection $\varphi$ maps each environment action to an atomic skill, turning every trajectory in $\mathcal{B}^{+}$ into a skill string $s = \varphi(\tau) \in \Sigma^{*}$. \textbf{(3)}~In the E-step, greedy BPE merges frequent adjacent pairs into composite skills, accepting a merge only when it decreases the two-part description length, which yields the dictionary $\mathcal{C}$; an optional FIFO buffer of past successes stabilizes extraction when the current batch is small. \textbf{(4)}~In the M-step, $s$ is covered by the fewest phrases available in $\mathcal{C}$, and the resulting segmentation cost shapes the reward as $r(\tau) = R(\tau) - \lambda\,\mathds{1}\{R = 1\}\cdot \mathrm{SegCost}(s \mid \mathcal{C})$. Group-normalized advantages then drive a standard GRPO update, and the updated policy resamples the next batch, so dictionary extraction and policy optimization alternate at every iteration.}
\label{fig:overview}
\vspace{-1em}
\end{figure*}
\section{Overview Figure of \methodname{}}
Here is a high-level framework diagram illustrating the full \methodname{} pipeline in Fig.~\ref{fig:overview}.

\section{Proofs and Corollary}\label{app:proofs}
\subsection{Proof of Proposition~\ref{prop:length}}

\begin{proof}
The first claim is immediate: when $\mathcal{C}_\Sigma = \Sigma$ contains only singleton phrases, the unique exact segmentation of any non-empty $s\in\Sigma^*$ is the per-skill decomposition, so $\mathrm{seg}(s,\mathcal{C}_\Sigma)=|s|$ and $\mathrm{SegCost}(s\mid \mathcal{C}_\Sigma) = |s|/T$, which is exactly the pure round-length penalty.

For the second claim, we prove it by counterexample. Let $\Sigma=\{\sigma_1,\sigma_2,\sigma_3\}$, $\mathcal{C}=\{(\sigma_1),(\sigma_2),(\sigma_3),(\sigma_1,\sigma_2)\}$, $s_A=(\sigma_1,\sigma_2)^{T/2}$, and $s_B=(\sigma_1,\sigma_3,\sigma_2,\sigma_3)^{T/4}$ for $T$ divisible by $4$ gives $\mathrm{seg}(s_A,\mathcal{C})=T/2$ and $\mathrm{seg}(s_B,\mathcal{C})=T$, with the dictionary-storage term identical for both trajectories so the difference in segmentation cost comes entirely from the segmentation term.

Combining the two claims, the round-length penalty $|s|/T$ coincides with $\mathrm{SegCost}(s\mid\mathcal{C})$ only when $\mathcal{C}=\mathcal{C}_\Sigma$, and in every other admissible dictionary there exist trajectories with identical raw length but different segmentation costs; the round-length penalty is therefore not a consistent estimator of the segmentation cost under any non-degenerate dictionary.
\end{proof}

\subsection{Proof of Theorem~\ref{thm:main}}

\begin{proof}
Define the prior
\[
P(\mathcal{C}) = \frac{2^{-L_{\mathrm{skill\_dict}}(\mathcal{C})}}{Z},
\\
Z = \sum_{\mathcal{C}'\in\mathcal{A}} 2^{-L_{\mathrm{skill\_dict}}(\mathcal{C}')}
\]
fixed before observing any successful training sample. We first show that $Z\leq 1$. From Eq.~\eqref{eq:dict-cost}, the per-phrase code length is $c(p)=|p|\log_2 K_\Sigma + \log_2 L$, and $L_{\mathrm{skill\_dict}}(\mathcal{C})=\sum_{p\in\mathcal{C}} c(p)$, so
\[
Z = \Big(\frac{1}{LK_\Sigma}\Big)^{K_\Sigma}\prod_{\ell=2}^{L}\Big(1+\frac{1}{LK_\Sigma^{\ell}}\Big)^{K_\Sigma^{\ell}}
\]
Using $\ln(1+x)\leq x$,
\[
\ln Z \leq -K_\Sigma \ln(LK_\Sigma) + \frac{L-1}{L} \leq 0,
\]
hence $Z\leq 1$.

For a sample $S_m=(s_1,\ldots,s_m)\sim (Q_\theta^+)^m$ and a dictionary $\mathcal{C}\in\mathcal{A}$, define the normalized loss
\[
d_m(\mathcal C) := \frac{L_{\mathrm{skill\text{-}dict}}(\mathcal C)}{m}
\]
\[
U_m(\mathcal C) := d_m(\mathcal C) + T \log_2 |\mathcal C|.
\]
For \(s \in S_m\), define the normalized loss
\[
\widetilde{\ell}_m(s,\mathcal C)
:=
\frac{
d_m(\mathcal C)
+
\operatorname{seg}(s,\mathcal C)\log_2 |\mathcal C|
}{
U_m(\mathcal C)
}.
\]
Because every non-empty successful skill sequence satisfies $1\leq \mathrm{seg}(s,\mathcal{C})\leq T$, we have $\widetilde{\ell}_m(s,\mathcal{C})\in[0,1]$. Write
\[
\begin{aligned}
L_m(Q)
&:=
\mathbb{E}_{\mathcal C\sim Q}
\mathbb{E}_{s\sim Q_\theta^+}
\bigl[
\widetilde{\ell}_m(s,\mathcal C)
\bigr],
\\
\widehat L_m(Q;S_m)
&:=
\mathbb{E}_{\mathcal C\sim Q}
\left[
\frac{1}{m}
\sum_{i=1}^{m}
\widetilde{\ell}_m(s_i,\mathcal C)
\right].
\end{aligned}
\]
Let $Q$ be the posterior point mass on the extracted dictionary $\widehat{\mathcal{C}}=\widehat{\mathcal{C}}(S_m)$. The standard PAC-Bayes bound for $[0,1]$-valued losses~\citep{mcallester2003pac} gives that, with probability at least $1-\delta$ over the draw of $S_m$,
\[
\mathrm{kl}\!\bigl(\widehat{L}_m(Q)\,\big\|\,L(Q)\bigr)
\leq
\frac{\mathrm{KL}(Q\|P) + \ln(m/\delta)}{m},
\]
where $\mathrm{kl}(q\|p)$ denotes the binary KL divergence. Since $Q$ is a point mass on $\widehat{\mathcal{C}}$,
\[
\begin{aligned}
\operatorname{KL}(Q\Vert P)
&= -\ln P(\widehat{\mathcal C})
\\
&= L_{\mathrm{skill\text{-}dict}}(\widehat{\mathcal C})\,\ln 2
   + \ln Z
\\
&\le
L_{\mathrm{skill\text{-}dict}}(\widehat{\mathcal C})\,\ln 2,
\end{aligned}
\]
using $Z\leq 1$. Substituting and applying Pinsker's inequality $|p-q|\leq\sqrt{\mathrm{kl}(p\|q)/2}$ yields
\[
L(Q) \leq \widehat{L}_m(Q) + \sqrt{\frac{L_{\mathrm{skill\_dict}}(\widehat{\mathcal{C}})\ln 2 + \ln(m/\delta)}{2m}}.
\]
Because $Q$ is a point mass on $\widehat{\mathcal{C}}$, the expectations over $\mathcal{C}\sim Q$ collapse:
\[
\begin{aligned}
L(Q)
&= \mathbb{E}_{s\sim Q_\theta^+}
   \!\left[\widetilde{\ell}_m(s,\widehat{\mathcal{C}})\right],\\
\widehat{L}_m(Q)
&= \frac{1}{m}\sum_{i=1}^{m}
   \widetilde{\ell}_m(s_i,\widehat{\mathcal{C}}).
\end{aligned}
\]
which establishes Eq.~\eqref{eq:gen-bound}.
\end{proof}

\subsection{Corollary on the Universal Dictionary}\label{app:corollary}

\begin{proposition}[Bound under the universal dictionary]\label{prop:corollary}
Under Assumption~\ref{asm:compositional}, 
for every \(s\) in the support of \(Q_\theta^+\), the dictionary
\(\mathcal C^\star\) satisfies
\[
\begin{aligned}
&\frac{L_{\mathrm{skill\text{-}dict}}(\mathcal C^\star)}{m}
+
\operatorname{seg}(s,\mathcal C^\star)
\log_2 |\mathcal C^\star|
\\
&\qquad\le
U_m^\star
:=
\frac{B^\star}{m}
+
K^\star \log_2 |\mathcal C^\star|.
\end{aligned}
\]
Applying the same PAC-Bayes argument to the normalized loss yields
\[
\psi_m^\star(s)
:=
\frac{L_{\mathrm{skill}\text{-}\mathrm{dict}}(\mathcal C^\star)}{m}
+
\operatorname{seg}(s,\mathcal C^\star)\log_2|\mathcal C^\star|.
\]
\begin{equation}\label{eq:corollary}
\begin{aligned}
\mathbb{E}_{s\sim Q_\theta^+}
\bigl[\psi_m^\star(s)\bigr]
&\le
\frac{1}{m}
\sum_{i=1}^{m}
\psi_m^\star(s_i)
\\
&\quad+
U_m^\star
\sqrt{
\frac{
B^\star\ln 2+\ln(m/\delta)
}{
2m
}
}.
\end{aligned}
\end{equation}
\end{proposition}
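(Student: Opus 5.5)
The plan is to reduce the corollary to the argument already used for Theorem~\ref{thm:main}, with the fixed dictionary $\mathcal C^\star$ playing the role of the point-mass posterior. I would work directly with the unnormalized loss $\psi_m^\star(s)$ and rescale it by the constant $U_m^\star$.

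\textbf{Step 1: the pointwise bound.} For $s$ in the support of $Q_\theta^+$, Assumption~\ref{asm:compositional}(i) gives $L_{\mathrm{skill\_dict}}(\mathcal C^\star)\le B^\star$. Assumption~\ref{asm:compositional}(ii) gives $\mathrm{seg}(s,\mathcal C^\star)\le K^\star$. Since $\log_2|\mathcal C^\star|\ge 0$, adding the two bounds yields $\psi_m^\star(s)\le U_m^\star$. Because $\psi_m^\star\ge 0$, the rescaled loss $\widetilde\psi(s):=\psi_m^\star(s)/U_m^\star$ takes values in $[0,1]$ almost surely under $Q_\theta^+$. This replaces the crude normalization $T\log_2|\mathcal C|$ of the theorem with the sharper $K^\star\log_2|\mathcal C^\star|$.

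\textbf{Step 2: PAC-Bayes.} I would keep the prior $P(\mathcal C)\propto 2^{-L_{\mathrm{skill\_dict}}(\mathcal C)}$ and reuse $Z\le 1$ from the proof of Theorem~\ref{thm:main}. The posterior is the point mass on $\mathcal C^\star$, so that $\mathrm{KL}(Q\|P)\le L_{\mathrm{skill\_dict}}(\mathcal C^\star)\ln 2\le B^\star\ln 2$.

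There is one subtlety. The McAllester bound requires the family of $[0,1]$-valued losses to be fixed independently of the sample. Here the normalizer $U_m^\star$ depends only on $B^\star$, $K^\star$, $|\mathcal C^\star|$ and $m$, so this condition holds. The posterior also does not depend on $S_m$; in fact a plain Hoeffding bound would already suffice, and PAC-Bayes only matches the form of the theorem. Applying the bound together with Pinsker's inequality gives, with probability at least $1-\delta$,
\[
\mathbb{E}_{s\sim Q_\theta^+}\bigl[\widetilde\psi(s)\bigr]\le \frac1m\sum_{i=1}^m \widetilde\psi(s_i)+\sqrt{\frac{B^\star\ln 2+\ln(m/\delta)}{2m}}.
\]

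\textbf{Step 3: rescaling.} Multiplying through by $U_m^\star>0$ recovers Eq.~\eqref{eq:corollary}.

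\textbf{Main obstacle.} The delicate point is ensuring that the loss range used in the concentration step is a deterministic constant. One cannot normalize by the actual $L_{\mathrm{skill\_dict}}(\mathcal C^\star)$ mixed with $B^\star$ inconsistently. I would therefore normalize by $U_m^\star$ throughout, which is well defined because it involves only assumption constants. I would also note the degenerate case $|\mathcal C^\star|=1$ separately, where $U_m^\star$ may reduce to $B^\star/m$; there the bound is trivial, since the loss is constant.
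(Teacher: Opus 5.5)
Your proposal is correct and follows the paper's intended argument. You bound $\psi_m^\star$ pointwise by $U_m^\star$ using both parts of Assumption~\ref{asm:compositional}, divide by that deterministic constant, rerun the PAC-Bayes step of Theorem~\ref{thm:main} with a point mass on $\mathcal C^\star$ so that $\mathrm{KL}(Q\|P)\le B^\star\ln 2$, and multiply back by $U_m^\star$. Your added remark is a genuine improvement the paper omits: since $\mathcal C^\star$ does not depend on $S_m$, plain Hoeffding already gives the bound with $\sqrt{\ln(1/\delta)/(2m)}$, which is no larger than the stated term for $m\ge 1$ and avoids relying on the exact $\ln(m/\delta)$ constant in the PAC-Bayes inequality for small $m$.
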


Equation~\eqref{eq:corollary} makes the roles of $B^\star$ and $K^\star$ explicit: a smaller universal dictionary and fewer reusable segments per successful trajectory directly imply a smaller out-of-sample description length for future successful trajectories. Combined with the EM picture of Section~\ref{sec:idio}---in which the E-step reduces $\mathcal{L}_{\mathrm{BDL}}$ in $\mathcal{C}$ and the M-step reduces the empirical segmentation cost at fixed $\mathcal{C}$---this corollary is what justifies the per-trajectory penalty in Eq.~\eqref{eq:rl-mdl} as a generalization-improving regularizer.

\section{Implicit MDL: SkillRL and ERL Objective Decompositions}\label{app:implicit-mdl}

\paragraph{SkillRL as implicit library compression.}
SkillRL~\citep{xia2026skillrl} constructs a hierarchical skill library $\mathcal{S} = \mathcal{S}_g \cup \mathcal{S}_k$ (general and task-specific skills) via teacher-model distillation. Each skill $s_j\in\mathcal{S}$ is a natural-language behavioral rule compressed from multiple trajectories by an external teacher model. This distillation can be viewed as constructing an implicit codebook $\mathcal{C}_{\mathrm{skill}}$ where each entry encodes a reusable behavioral subroutine, and the skill-evolution and retrieval hyperparameters effectively manage which parts of this library are exposed to the base policy. However, SkillRL does not penalize the segmentation complexity of successful trajectories during RL updates: the policy may still combine retrieved skills in arbitrarily complex ways. At a high level, SkillRL optimizes
\begin{equation}\label{eq:skillrl}
\begin{gathered}
J_{\mathrm{SkillRL}}(\theta,\mathcal{S})
= \mathbb{E}_{\mathcal{B}}\!\left[
\frac{1}{n}\sum_{i=1}^{n} R(\tau_i^{\mathcal{S}})
\right],\\
\mathcal{S}
\leftarrow \texttt{Evolve}\!\left(
\mathcal{S}, \pi_\theta, \mathcal{D}_{\mathrm{val}}
\right).
\end{gathered}
\end{equation}
where $\tau_i^{\mathcal{S}}$ denotes a trajectory generated with retrieved skills. Moreover, their evolving library $\mathcal{S}$ does not compresses the dictionary term $L_{\mathrm{skill\_dict}}$ since they only allow addition of skills, the objective contains no explicit success-conditioned term of the form $\mathds{1}\{R(\tau_i)=1\}\cdot\mathrm{SegCost}(\cdot)$ during policy optimization, and the top-$K$ retrieval only controls memory exposure rather than shared segmentation cost across successful trajectories.

\paragraph{ERL as implicit instance-level internalization.}
ERL~\citep{shi2026experiential} implements an experience--reflection--consolidation loop: for a failed or under-performed trajectory $\tau^{(1)}$, the policy generates a structured reflection $\Delta$ and a corrected trajectory $\tau^{(2)}$, and an internalization loss $\mathcal{L}_{\mathrm{distill}}(\theta) = -\mathbb{E}[\mathds{1}(R(\tau^{(2)})>0)\log\pi_\theta(\tau^{(2)}\mid d)]$ trains the base policy to reproduce successful corrections \emph{without} the reflection context. In MDL terms, ERL starts with a two-part code $L(\Delta)+L(\tau^{(2)}\mid\Delta)$ and compresses it into a single-part code $L(\tau^{(2)})$ by absorbing the codebook (reflections) into model weights---a genuine form of description-length minimization, but at the \emph{instance} level: each reflection addresses a single episode rather than capturing structural patterns shared across trajectories. The effective ERL objective is
\begin{equation}\label{eq:erl}
\begin{aligned}
J_{\mathrm{ERL}}(\theta)
&= \mathbb{E}_{\mathcal{B}}\!\Biggl[
\frac{1}{n}\sum_{i=1}^{n}
\Bigl(
R(\tau_i^{(1)}) \\
&\quad {}-\eta\,
\mathds{1}\!\left\{
\substack{
R(\tau_i^{(1)})=0,
R(\tau_i^{(2)})=1
}
\right\} \\
&\quad {}\cdot
\mathrm{KL}\!\left(
\pi_\theta(\cdot\mid d_i,\Delta_i)
\,\middle\|\,
\pi_\theta(\cdot\mid d_i)
\right)
\Bigr)
\Biggr].
\end{aligned}
\end{equation}
where the indicator activates the reverse-KL term only when the first attempt fails and the reflected second attempt succeeds. ERL therefore compresses an instance-specific two-part code into model weights but does not search for a shared structural dictionary across the successful batch.

\section{Algorithm}\label{app:algorithm}

Algorithm~\ref{alg:reuserl} summarizes the full \methodname{} training loop.

\begin{algorithm}[h]
\caption{\methodname{}: BPE Skill-Compression RL with Segmentation Cost}\label{alg:reuserl}
\DontPrintSemicolon
\KwIn{Policy $\pi_\theta$; skill alphabet $\Sigma$; efficiency coefficient $\lambda$; max steps $T$; phrase cap $L$; global FIFO success buffer $\mathcal{G}$ (optional).}
\For{\textnormal{each training step}}{
  Roll out $\pi_\theta$ on a batch of tasks to obtain trajectories $\{\tau_1,\ldots,\tau_n\}$\;
  Compute task rewards $R(\tau_i)\in\{0,1\}$\;
  $\mathcal{B}^+ \gets \{\tau_i : R(\tau_i)=1\}$ \tcp*{successful trajectories}
  \If{$\mathcal{B}^+ = \emptyset$}{
    Update $\pi_\theta$ via GRPO with original rewards $\{R(\tau_i)\}$\;
    \textbf{continue}\;
  }
  \For{$\tau_i \in \mathcal{B}^+$}{
    $s_i \gets \varphi(\tau_i)$ \tcp*{rule-based projection over action verbs}
  }
  Append $\{s_i : \tau_i\in\mathcal{B}^+\}$ to global FIFO buffer $\mathcal{G}$\ (optional);
  $\mathcal{C} \gets \texttt{BPE-MDL}(\{s_i : \tau_i\in\mathcal{B}^+\cup\mathcal{G}\};\,L)$ \tcp*{greedy dictionary minimizing $\mathcal{L}_{\mathrm{BDL}}$}
  $\ell_i  \gets \mathrm{seg}(s_i, \mathcal{C})/T$ for each $\tau_i \in \mathcal{B}^+$\;
  $r_i \gets R(\tau_i) - \lambda\cdot\mathds{1}\{R(\tau_i)=1\}\cdot\ell_i$ for $\tau_i \in \mathcal{B}^+$\;
  $r_i \gets R(\tau_i)$ for $\tau_i \notin \mathcal{B}^+$\;
  Update $\pi_\theta$ via GRPO with shaped rewards $\{r_i\}$\;
}
\end{algorithm}

\section{More Training and Evaluation Details}\label{appendix:experiment_details}
\paragraph{Task Details}ALFWorld requires an agent to complete household tasks (e.g., ``heat a potato and place it on the counter'') by issuing natural-language actions in a partially observable simulated environment; the maximum interaction horizon is $T=50$ steps and a successful trajectory receives terminal reward $+10$, which means $R(\tau)\in\{0,10\}$ in practice because following the verl-agent \citep{feng2025gigpo} framework we have a invalid action penalty set as 0.01 and this sometimes prevents the model from collapsing their outputs. This mechanism has applied to all environments used here. TextWorld-Cooking is the cooking game under the TextWorld framework: the agent must read the recipe, find all ingredients, process them (cut/slice/dice), cook them, prepare the meal, and eat it; ordering matters and some processing steps could directly lead to failures such as burning the food, an irreversible terminal failure. The current generated training and evaluation data both follow a $4{:}1$ simple-hard division/splits based on the rooms, ingredients, is\_cook, and is\_cut settings; the hard setting is room=6, ingredients=2, is\_cook=True, is\_cut=True and the simple setting refers to room=1, ingredients=1, is\_cook=True|False, is\_cut=True|False. We use $T=40$ and terminal reward $+10$ on success. Countdown-Stepwise is a stepwise variant of the Countdown game from Reasoning-Gym in which each action is either a single arithmetic operation between two numbers from the current number pool (\textsc{Add}, \textsc{Sub}, \textsc{Mul}, or \textsc{Div}), \textsc{Rollback} to undo the previous operation, or \textsc{Reset} to restore the initial pool. We train and evaluate on $3$- to $4$-number scenarios with target values in $[10,999]$, $T=30$, and terminal reward $+10$ on success. All three environments verify intermediate steps so that successful trajectories are process-correct.

\paragraph{Training Details} Training is performed on Nvidia RTX Pro 6000 96\,GB GPUs with tensor parallelism, using the verl-agent framework~\citep{feng2025gigpo} on top of verl~\citep{sheng2025hybridflow}. The efficiency coefficient is $\lambda=10$ for both penalty-based variants as mentioned previously our binary reward setting is $R(\tau)\in\{0,10\}$, following \citet{feng2025gigpo}. The evaluation inference engine is based on vLLM~\citep{vllm}. The skill-projection rules and BPE search are CPU-only and run inline with the GRPO update loop; the global FIFO success buffer is sized at $256$ skill sequences.

\paragraph{Evaluation Details} The Qwen2.5-1.5B-Instruct is using temperature=0.4 for sampling while the Qwen3-1.7B is following the recommended sampling setting: the temperature=0.7 top\_p=0.8, and top\_k=20, respectively. The Alfworld's two test splits: IID and OOD have 140 and 134 scenes, respectively. The Textworld-Cooking and Countdown-Stepwise have 1000 and 1024 generated test examples.

\section{Atomic Skill Alphabets and Rule-Based Mappings}
\label{app:skills}

This appendix details the deterministic verb-matching rules that implement
the skill projection $\varphi$ for each environment.
Table~\ref{tab:skill-mappings} lists the action prefix patterns and their
atomic-skill targets. We additionally provide one worked example per
environment to illustrate how a raw trajectory becomes an atomic-skill sequence.

\begin{table*}[t]
\centering
\small
\setlength{\tabcolsep}{5pt}
\renewcommand{\arraystretch}{1.08}
\begin{tabularx}{\textwidth}{
@{}
>{\raggedright\arraybackslash}X
>{\raggedright\arraybackslash}p{0.32\textwidth}
@{}}
\toprule
\textbf{Action prefix / form} & \textbf{Atomic skill} \\
\midrule

\multicolumn{2}{@{}l}{\textbf{ALFWorld} ($|\Sigma|=5$)} \\
\cmidrule(lr){1-2}
\texttt{go to *}, \texttt{open *} (not carrying)
& \skill{Explore} \\
\texttt{go to *}, \texttt{open *} (carrying)
& \skill{Transport} \\
\texttt{look}, \texttt{examine *}
& \skill{Explore} \\
\texttt{take *}
& \skill{Take} \\
\texttt{move *}, \texttt{put *}
& \skill{Deliver} \\
\texttt{heat *}, \texttt{cool *}, \texttt{clean *},
\texttt{use *}, \texttt{light *}
& \skill{Transform} \\

\addlinespace[0.6em]
\multicolumn{2}{@{}l}{\textbf{TextWorld-Cooking} ($|\Sigma|=10$)} \\
\cmidrule(lr){1-2}
\texttt{examine cookbook}
& \skill{Read\_Recipe} \\
\texttt{look}, \texttt{inventory}, \texttt{eat *},
\texttt{examine *} (other)
& \skill{Inspect} \\
\texttt{go *}
& \skill{Explore} \\
\texttt{open *}, \texttt{close *}
& \skill{Open} \\
\texttt{take *}
& \skill{Take} \\
\texttt{drop *}, \texttt{put *}, \texttt{insert *}
& \skill{Deliver} \\
\texttt{chop *}, \texttt{slice *}, \texttt{dice *}
& \skill{Cut} \\
\texttt{cook *}
& \skill{Cook} \\
\texttt{prepare meal}
& \skill{Prepare\_Meal} \\
\texttt{eat meal}
& \skill{Eat\_Meal} \\

\addlinespace[0.6em]
\multicolumn{2}{@{}l}{\textbf{Countdown-Stepwise} ($|\Sigma|=26$)} \\
\cmidrule(lr){1-2}
\(\texttt{op}(+,n_1,n_2)\)
& \skill{OP\_Add-}\(\langle r_1\rangle\)\skill{-}\(\langle r_2\rangle\) \\
\(\texttt{op}(-,n_1,n_2)\)
& \skill{OP\_Sub-}\(\langle r_1\rangle\)\skill{-}\(\langle r_2\rangle\) \\
\(\texttt{op}(\ast,n_1,n_2)\)
& \skill{OP\_Mul-}\(\langle r_1\rangle\)\skill{-}\(\langle r_2\rangle\) \\
\(\texttt{op}(/,n_1,n_2)\)
& \skill{OP\_Div-}\(\langle r_1\rangle\)\skill{-}\(\langle r_2\rangle\) \\
\texttt{rollback}
& \skill{Rollback} \\
\texttt{reset}
& \skill{Reset} \\
\bottomrule
\end{tabularx}
\caption{
Atomic-skill mappings for all three environments.
In ALFWorld, the carry flag is updated when an effective
\texttt{take}, \texttt{move}, or \texttt{put} action changes the inventory.
In TextWorld-Cooking, \texttt{open} and \texttt{close} are merged into
\skill{Open}. In Countdown-Stepwise, \(r_1,r_2\) are target-relative
role tags sorted alphabetically before joining.
}
\label{tab:skill-mappings}
\end{table*}

\subsection{ALFWorld ($|\Sigma|=5$)}

\paragraph{Worked example.}
{\small\raggedright
\textbf{Goal.}
\emph{``put a clean ladle on countertop 1''}

\par\smallskip
\textbf{Trajectory.}\par
\texttt{go to drawer 1}\\
\texttt{open drawer 1};\\
\texttt{take ladle 1 from drawer 1};\\
\texttt{go to sinkbasin 1};\\
\texttt{clean ladle 1 with sinkbasin 1};\\
\texttt{go to countertop 1};\\
\texttt{put ladle 1 on countertop 1}.

\par\smallskip
\textbf{Projection.}
\[
\begin{aligned}
\varphi(\tau)=(&
\skill{Explore}, \skill{Explore}, \skill{Take},\\
&\skill{Transport}, \skill{Transform}, \skill{Transport},\\
&\skill{Deliver}).
\end{aligned}
\]
\par}

\subsection{TextWorld-Cooking ($|\Sigma|=10$)}

\paragraph{Worked example.}
{\small\raggedright
\textbf{Trajectory.}\par
\texttt{examine cookbook}\\
\texttt{open fridge};\\
\texttt{take carrot from fridge}\\
\texttt{dice carrot};\\
\texttt{cook carrot with stove};\\
\texttt{prepare meal}\\
\texttt{eat meal}.

\par\smallskip
\textbf{Projection.}
\[
\begin{aligned}
\varphi(\tau)=(&
\skill{Read\_Recipe}, \skill{Open}, \skill{Take},\\
&\skill{Cut}, \skill{Cook}, \skill{Prepare\_Meal},\\
&\skill{Eat\_Meal}).
\end{aligned}
\]
\par}

\subsection{Countdown-Stepwise ($|\Sigma|=26$)}

Each step's action is parsed into one of three forms:
\(\mathrm{Op}(\langle\mathrm{op}\rangle,n_1,n_2)\),
\(\mathrm{Rollback}\), or \(\mathrm{Reset}\).
For an \(\mathrm{Op}\) action with operands \((n_1,n_2)\) and target
\(T_{\mathrm{tgt}}\), each operand is assigned a target-relative role:
{\small
\[
\mathrm{role}(n)=
\begin{cases}
\mathrm{near\_target},
& \text{if } |n-T_{\mathrm{tgt}}|
  \leq 0.10|T_{\mathrm{tgt}}|,\\
\mathrm{small},
& \text{if } n<T_{\mathrm{tgt}}-0.10|T_{\mathrm{tgt}}|,\\
\mathrm{large},
& \text{otherwise.}
\end{cases}
\]
}
The two role tags are sorted alphabetically before being joined, treating
the operation as unordered. Thus, \texttt{op(+, 1, 10)} and
\texttt{op(+, 10, 1)} induce the same unordered role pair. With four
operators and six unordered role pairs from
\(\{\mathrm{large},\mathrm{near\_target},\mathrm{small}\}\), this yields
\(4\times 6=24\) operation skills, plus \skill{Rollback} and \skill{Reset},
for a total of \(26\).

\paragraph{Worked example.}
{\small\raggedright
\textbf{Puzzle.}
\(\{80,2,28,1\}\) with target \(T_{\mathrm{tgt}}=54\).
Here \(n>59.4\) is \(\mathrm{large}\),
\(|n-54|\leq 5.4\) is \(\mathrm{near\_target}\),
and all remaining values are \(\mathrm{small}\).

\par\smallskip
\textbf{Trajectory with roles.}\par
\texttt{op(-, 80, 28)}:
\(80\mapsto\mathrm{large}\), \(28\mapsto\mathrm{small}\);\\
\texttt{op(+, 52, 2)}:
\(52\mapsto\mathrm{near\_target}\), \(2\mapsto\mathrm{small}\);\\
\texttt{op(*, 54, 1)}:
\(54\mapsto\mathrm{near\_target}\), \(1\mapsto\mathrm{small}\).

\par\smallskip
\textbf{Projection.}
\[
\begin{aligned}
\varphi(\tau)=(&
\skill{OP\_Sub-large-small},\\
&\skill{OP\_Add-near\_target-small},\\
&\skill{OP\_Mul-near\_target-small}).
\end{aligned}
\]
\par}


\section{Ablation Details}\label{app:ablations}

\subsection{More Details about BPE Merging}
\label{app:bpe-merging}

\paragraph{Setup.}
We quantify the approximation gap between greedy BPE-merging and exhaustive brute-force (BF) search over all candidate dictionaries. The benchmark is a synthetic corpus of around 200 random skill sequences with dummy atomic skills ($|\Sigma|=5$, i.e.\ $\{A, B, C, D, E\}$), with sequence lengths sampled uniformly from 2 to 6 and each group consisting of 10 sequences. We compare against two admissible brute-force variants. \emph{BF-all-singletons} requires every corpus singleton to be retained in the candidate dictionaries, matching BPE's structural invariant. \emph{BF-original} drops this constraint and attains the true $\min L_{BDL}$ by pruning singletons whose every occurrence is covered by a longer phrase.

\paragraph{Approximation quality.}
BPE attains a mean penalty of $7.25$, within $0.14\%$ of BF-all-singletons ($7.24$) and recovering $99.02\%$ of its non-singleton phrases, showing that the greedy schedule is essentially exact within its own search space. BF-original reaches $7.01$, a $3.3\%$ reduction below BPE's upper bound, but shares only $66.34\%$ of non-singleton phrases with BPE and requires exhaustive search.

\paragraph{Runtime.}
BPE takes $0.053$~ms per group ($10.9$~ms total), versus $11.86$~ms ($2430.6$~ms total) for BF-all-singletons and $118.58$~ms ($24308.5$~ms total) for BF-original, giving $285\times$ and $2231\times$ speedups respectively. The combination of a tight approximation gap and negligible runtime is what makes inline E-step dictionary extraction practical within the GRPO loop.

\subsection{More Details about the Global Success Buffer Ablation}\label{app:ablation_global_buffer}
\paragraph{Countdown-Stepwise closing operations (Table~\ref{tab:cd-ops}).}
Countdown-Stepwise has exactly two terminal-loss conditions: \texttt{Timeout} (the $30$-step budget is exhausted) and \texttt{Stuck} (the model issues \textsc{Op} or an unparsable action while the working pool has fewer than two numbers, which the env treats as unrecoverable). Within-episode \textsc{Reset}/\textsc{Rollback} counts and invalid-action counts are descriptors of rollout noise but not termination criteria. Table~\ref{tab:cd-ops} reports the raw count of \emph{closing} operations per method---the operation that puts the pool at $[\,\text{target}\,]$---broken down by operator. Only the buffered segmentation cost lifts the rare $*$ and $/$ closures (e.g.\ $\mathrm{OP}(*,1,T)$ to absorb a leftover $1$); removing the buffer collapses the \textsc{Mul} count from $36$ to $18$, essentially matching GRPO ($20$).

\subsection{Random Atomic Skill Projection}\label{app:skill_projection}

To test whether our gains stem from compression over a meaningful abstraction rather than from the compression machinery itself, we replace the rule-based projection $\varphi$ on Countdown-Stepwise, which is the environment with the largest and most structured alphabet ($|\Sigma|=26$), with a fixed random assignment of each parsed action to a uniformly drawn skill, preserving determinism and alphabet size while destroying the target-relative role semantics; all other components are unchanged. The random projection attains $75.23 \pm 0.15$ Pass@1 over three seeds: well above vanilla GRPO ($68.46$), since in a budget-exhaustion environment any consistent compression pressure discourages wasted steps, yet below even the degenerate pure round-length baseline ($77.02$) and far below \methodname{}-SegCost ($80.37$). Under an incoherent alphabet, successful trajectories share little compressible substructure, so BPE merges spurious co-occurrences and the segmentation cost collapses into a noisy variant of the raw-length penalty. The observed performance gains therefore require both ingredients: the MDL objective and an abstraction that preserves behaviorally meaningful distinctions.

\subsection{\methodname{} with Larger Models}\label{app:larger_models}

To test whether the benefits of structural compression persist beyond the 1.5--1.7B scale of our main experiments, we repeat the Countdown-Stepwise comparison with Qwen2.5-3B-Instruct and Qwen2.5-7B-Instruct, keeping all training hyperparameters fixed and using a global success buffer of size 1024; each configuration is evaluated over three seeds. As Table~\ref{tab:model_scale} shows, the strict ordering \methodname{}-SegCost $\succ$ pure round-length $\succ$ vanilla GRPO from Table~\ref{tab:main} is preserved at both scales. Two trends are worth noting. First, stronger base models close much of the gap on their own: vanilla GRPO rises from $68.46$ at 1.5B to $79.98$ at 7B; yet the segmentation cost still adds a consistent margin over both baselines ($+3.71/+1.20$ at 3B and $+7.10/+0.98$ at 7B over GRPO and round-length, respectively). Second, the advantage of \methodname{} over uniform length pressure narrows with scale but does not vanish, suggesting that larger models internalize some reusable solver structure from success rewards alone, while the explicit MDL signal remains complementary rather than redundant. We therefore find no evidence that our conclusion on the MDL compression over skills is only an artifact of small model capacity.

\begin{table}[t]
\centering
\small
\begin{tabular}{lcc}
\toprule
Method & 3B & 7B \\
\midrule
Vanilla GRPO & $74.28 \pm 0.41$ & $79.98 \pm 1.28$ \\
Pure Round-Length & $76.79 \pm 0.50$ & $86.10 \pm 0.65$ \\
\methodname{}-SegCost & $\mathbf{77.99 \pm 0.20}$ & $\mathbf{87.08 \pm 0.46}$ \\
\bottomrule
\end{tabular}
\caption{Countdown-Stepwise Pass@1 (mean $\pm$ std over three runs at a fixed seed) with larger Qwen2.5-Instruct backbones. The ordering \methodname{}-SegCost $\succ$ pure round-length $\succ$ vanilla GRPO from Table~\ref{tab:main} holds at both scales.}
\label{tab:model_scale}
\end{table}

\begin{table}[!t]
\centering
\footnotesize
\setlength{\tabcolsep}{4.0pt}
\renewcommand{\arraystretch}{1.05}
\begin{tabular}{@{}lcccc@{}}
\toprule
\textbf{Closing OP}
& \makecell[c]{\textbf{GRPO}}
& \makecell[c]{\textbf{Pure}\\\textbf{Length}}
& \makecell[c]{\textbf{SegCost}\\\textbf{(buf)}}
& \makecell[c]{\textbf{SegCost}\\\textbf{(no buf)}} \\
\midrule
$+$ (\textsc{Add}) & 359 & 393 & 392 & 354 \\
$-$ (\textsc{Sub}) & 319 & 368 & 392 & 335 \\
$*$ (\textsc{Mul}) & 20 & 39 & \textbf{36} & 18 \\
$/$ (\textsc{Div}) & 2 & 0 & 1 & 0 \\
\bottomrule
\end{tabular}
\caption{Closing-operation counts on successful Countdown-Stepwise trajectories. The buffered segmentation cost variant retains the multiplicative-closure motifs that the no-buffer variant prunes.}
\label{tab:cd-ops}
\end{table}

\paragraph{TextWorld-Cooking mechanism (Table~\ref{tab:tw-mechanism}).}
We call a TW-Cooking recipe \texttt{cook=False + cookbook-says-cook} when its \texttt{cook} gate is off---the ingredient arrives in the observation already in cooked form, e.g.\ \texttt{roasted red apple} on the counter---but the cookbook page still mentions the cooking of the ingredients; attempting to cook the ingredient a second time burns it. There are $260$ such games in the $1000$-game validation split. Table~\ref{tab:tw-mechanism} reports, per method, (1) the share of decision steps whose chain-of-thought references a cooking verb (\texttt{cook|roast|fry|grill|stove|oven}), (2) the conditional rate at which such steps emit a \texttt{cook} action, and (3) the per-game emission rate of \texttt{cook} actions split by cookbook content. The four policies think about cooking at comparable rates (24--61\%); only the cookbook-literal fixed-code policy commits to a \texttt{cook} action conditional on that thought ($73\%$), while both segmentation cost variants suppress the action to $0.3\%$.

\begin{table}[!t]
\centering
\small
\setlength{\tabcolsep}{5pt}
\renewcommand{\arraystretch}{1.08}
\resizebox{\columnwidth}{!}{%
\begin{tabular}{@{}lcccc@{}}
\toprule
\textbf{Method}
& \makecell[c]{\textbf{Thought}\\\textbf{mentions}\\\texttt{cook}}
& \makecell[c]{\textbf{Action}\\\texttt{cook}\\\textbf{$\mid$ thought}}
& \makecell[c]{\textbf{Cookbook}\\\textbf{says cook}}
& \makecell[c]{\textbf{Cookbook}\\\textbf{silent}} \\
\midrule
GRPO              & 24 & 8          & 12          & 16 \\
Pure-Length       & 42 & \textbf{73} & \textbf{94} & 0  \\
SegCost (buf)     & 34 & 0.3        & 1           & 0  \\
SegCost (no buf)  & \textbf{61} & 0.3 & 1           & 0  \\
\bottomrule
\end{tabular}%
}
\caption{Per-method thought-vs-action breakdown on
\texttt{cook=False + cookbook-says-cook} TW-Cooking games. The first
two columns report step-level rates: how often thoughts mention
\texttt{cook}, and how often the resulting action is \texttt{cook}
conditional on such a mention. The last two columns report game-level
cook-action emission rates by cookbook content.}
\label{tab:tw-mechanism}
\end{table}

\section{Extended Case-Study Material and Learned-Dictionary Examples}
\label{app:case-study}

This appendix collects the per-method trajectory analyses supporting Section~\ref{sec:case-study}, together with representative high-frequency non-singleton phrases from the learned dictionary $\widehat{\mathcal{C}}$ and their correspondence to skills distilled by SkillRL~\citep{xia2026skillrl}. All trajectory-level numbers below are computed on the same single seed's step-$300$ eval JSON per method.

\paragraph{ALFWorld (Table~\ref{tab:case-alfworld-detail}).}
Failed ALFWorld episodes all hit the $50$-step cap; behavioral signal therefore lives in the action-quality metrics on \emph{successful} episodes. Vanilla GRPO leaks roughly one-quarter of its slots to ``Nothing happens.'' returns and repeats earlier actions on $40\%$ of slots; both \methodname{} variants and pure-round-length reduce these pathologies to single digits.

\begin{table}[!t]
\centering
\footnotesize
\setlength{\tabcolsep}{3.0pt}
\renewcommand{\arraystretch}{1.05}
\resizebox{\columnwidth}{!}{%
\begin{tabular}{@{}lcccc@{}}
\toprule
\textbf{ALFWorld IID (successes)}
& \makecell[c]{\textbf{GRPO}}
& \makecell[c]{\textbf{Pure}\\\textbf{Length}}
& \makecell[c]{\textbf{SegCost}\\\textbf{(buf)}}
& \makecell[c]{\textbf{SegCost}\\\textbf{(no buf)}} \\
\midrule
Avg succ.-episode length (steps)
& 15.3 & \textbf{8.7} & 9.3 & 8.4 \\
``Nothing-happens'' rate (\%)
& 23.5 & \textbf{1.4} & 1.8 & 3.9 \\
Action-repetition rate (\%)
& 40.4 & \textbf{9.4} & 14.4 & 9.9 \\
Redundant \texttt{go to} revisits
& 3.4 & 1.4 & 1.9 & \textbf{0.85} \\
\bottomrule
\end{tabular}%
}
\caption{ALFWorld per-method behavioral quality on successful IID episodes. ``Nothing happens.'' is the env response when an action is not admissible from the current state.}
\label{tab:case-alfworld-detail}
\end{table}

\paragraph{ALFWorld failure modes on the OOD split (Table~\ref{tab:alf-failure-modes}).}
Every failed ALFWorld episode hits the $50$-step cap; we label timeouts by what was still missing.
\texttt{never\_picked\_target} means no \emph{effective} \texttt{take} of the goal object; \texttt{two\_partial\_deliver} means only one of two instances was placed on \emph{pick\_two\_obj\_and\_place}; \texttt{picked\_no\_transform} means the agent took the goal object but never executed an effective \texttt{clean}/\texttt{heat}/\texttt{cool} (checked only after a successful take).
On the OOD split ($134$ logical games, $938$ episodes), \methodname{}-SegCost improves SC@7 by $1.49$\,pp over pure-round-length ($93.28$ vs.\ $91.79$) with $21$ fewer failed rollouts ($50$ vs.\ $71$).
The gap is driven almost entirely by \texttt{never\_picked\_target} on held-out \emph{multi-phase} tasks---especially \emph{pick\_clean\_then\_place\_in\_recep} ($n{=}217$), where Pure-Length accumulates $19$ search failures vs.\ $6$ for segmentation cost---not by \texttt{two\_partial\_deliver} (three failures each on Two-Obj) or lamp-ordering errors (both methods reach $100\%$ on Look+Light).
Overall OOD \texttt{never\_picked\_target} falls from $64$ to $44$, matching the analysis that uniform step counting over-penalizes exploratory prefixes that the BPE dictionary treats as reusable structure.
\begin{table}[!t]
\centering
\footnotesize
\setlength{\tabcolsep}{3.0pt}
\renewcommand{\arraystretch}{1.05}
\resizebox{\columnwidth}{!}{%
\begin{tabular}{@{}lcccc@{}}
\toprule
\textbf{Failed OOD episodes (raw counts)}
& \makecell[c]{\textbf{GRPO}}
& \makecell[c]{\textbf{Pure}\\\textbf{Length}}
& \makecell[c]{\textbf{SegCost}\\\textbf{(buf)}}
& \makecell[c]{\textbf{SegCost}\\\textbf{(no buf)}} \\
\midrule
\multicolumn{5}{@{}l}{\emph{Clean+Place ($n{=}217$)}} \\
\,\,\, total fails
& 45 & 19 & \textbf{6} & 12 \\
\,\,\, \texttt{never\_picked\_target}
& 42 & 19 & \textbf{6} & 12 \\
\,\,\, \texttt{picked\_no\_transform}
& 3 & 0 & 0 & 0 \\
\midrule
\multicolumn{5}{@{}l}{\emph{Two-Obj+Place ($n{=}119$)}} \\
\,\,\, total fails
& 61 & 9 & \textbf{5} & 10 \\
\,\,\, \texttt{never\_picked\_target}
& 39 & 5 & \textbf{1} & 5 \\
\,\,\, \texttt{two\_partial\_deliver}
& 20 & 3 & 3 & 5 \\
\midrule
\multicolumn{5}{@{}l}{\emph{All OOD tasks}} \\
\,\,\, total fails
& 231 & 71 & \textbf{50} & 88 \\
\,\,\, \texttt{never\_picked\_target}
& 181 & 64 & \textbf{44} & 67 \\
\bottomrule
\end{tabular}%
}
\caption{ALFWorld failure-mode counts on the OOD split. Different error cases are shown with the subtasks.}
\label{tab:alf-failure-modes}
\end{table}

\paragraph{TextWorld-Cooking (Table~\ref{tab:case-tw-detail}).}
Failed TW-Cooking episodes fall into three terminal conditions: \texttt{Burned} (the terminal observation contains \texttt{burned}), \texttt{Wrong processing/order} (a non-burn rule violation such as trying to slice an ingredient that should be chopped), and \texttt{Timeout} (the $40$-step budget is exhausted without either lost marker). \texttt{Pure-Length} fails \emph{fast}---mean failed-episode length $10.6$ steps---and almost always by burning; both segmentation cost variants fail near the step ceiling like GRPO.

\begin{table}[!t]
\centering
\footnotesize
\setlength{\tabcolsep}{3.0pt}
\renewcommand{\arraystretch}{1.05}
\resizebox{\columnwidth}{!}{%
\begin{tabular}{@{}lcccc@{}}
\toprule
\textbf{TW-Cooking}
& \makecell[c]{\textbf{GRPO}}
& \makecell[c]{\textbf{Pure}\\\textbf{Length}}
& \makecell[c]{\textbf{SegCost}\\\textbf{(buf)}}
& \makecell[c]{\textbf{SegCost}\\\textbf{(no buf)}} \\
\midrule
Succ.-episode length (steps, mean)
& 12.06 & \textbf{8.23} & 9.00 & 8.83 \\
Failed-episode length (steps, mean)
& 30.5 & \textbf{10.6} & 29.3 & 29.4 \\
\midrule
\multicolumn{5}{@{}l}{\emph{Failure-mode breakdown (\% of failures)}} \\
\texttt{Burned}
& 21.1 & \textbf{74.0} & 3.3 & 9.7 \\
\texttt{Wrong processing/order}
& 27.1 & 12.0 & \textbf{44.0} & 37.6 \\
\texttt{Timeout}
& 51.8 & 14.0 & 52.7 & 52.7 \\
\bottomrule
\end{tabular}%
}
\caption{TextWorld-Cooking per-method trajectory and failure-mode breakdown.}
\label{tab:case-tw-detail}
\end{table}

\paragraph{Countdown-Stepwise (Table~\ref{tab:case-cd-detail}).}
An episode in Countdown-Stepwise terminates under exactly two negative conditions: (i) \textit{Timeout}, where the agent exhausts the maximum allocation of 30 steps without reaching the target value, and (ii) \textit{Stuck}, a premature failure where the agent attempts to execute an arithmetic operation or emits an unparsable command when fewer than two numbers remain available in the working pool, rendering further evaluation mathematically impossible. Both compression methods, pure-round-length and \methodname{}-SegCost drive the failures toward \texttt{Timeout} (it has stopped getting stuck), while the no-buffer variant exhibits the opposite shift---a side-effect of having deleted \textsc{Reset} from its repertoire (zero RESETs in $99.9\%$ of successes), leaving it unable to recover from early suboptimal operations once the numbers are collapsed.

\begin{table}[!t]
\centering
\footnotesize
\setlength{\tabcolsep}{3.0pt}
\renewcommand{\arraystretch}{1.05}
\resizebox{\columnwidth}{!}{%
\begin{tabular}{@{}lcccc@{}}
\toprule
\textbf{Countdown-Stepwise}
& \makecell[c]{\textbf{GRPO}}
& \makecell[c]{\textbf{Pure}\\\textbf{Length}}
& \makecell[c]{\textbf{SegCost}\\\textbf{(buf)}}
& \makecell[c]{\textbf{SegCost}\\\textbf{(no buf)}} \\
\midrule
Succ.-episode length (steps, mean)
& 6.70 & 4.30 & 4.51 & \textbf{3.98} \\
Avg \textsc{Reset}s per success
& 1.07 & 0.52 & 0.58 & \textbf{0.00} \\
Zero-\textsc{Reset} successes (\%)
& 58.0 & 82.0 & 80.3 & \textbf{99.9} \\
Invalid-action rate, success (\%)
& 5.0 & \textbf{0.3} & 0.5 & 1.0 \\
Action-repetition rate, success (\%)
& 10.7 & 5.5 & \textbf{4.4} & 5.9 \\
\midrule
\multicolumn{5}{@{}l}{\emph{Operation diversity in successful episodes}} \\
\textsc{Mul}/\textsc{Div} share of OPs (\%)
& 3.1 & 6.8 & \textbf{10.9} & 5.5 \\
Avg unique op types per success
& 1.70 & 1.82 & \textbf{1.85} & 1.75 \\
\midrule
\multicolumn{5}{@{}l}{\emph{Env-level failure-mode partition (\% of failures)}} \\
\texttt{Timeout} ($\mathrm{len}{=}30$)
& 68.5 & 82.6 & 87.2 & \textbf{56.8} \\
\texttt{Stuck} ($\mathrm{len}{<}30$, \textsc{Op}/\texttt{INVALID\_PARSE})
& 31.5 & 17.4 & \textbf{12.8} & 43.2 \\
\bottomrule
\end{tabular}%
}
\caption{Countdown-Stepwise per-method trajectory analysis. The only env-level failure modes are \texttt{Timeout} and \texttt{Stuck}.}
\label{tab:case-cd-detail}
\end{table}

\paragraph{Learned dictionary phrases (Table~\ref{tab:top-phrases}) and SkillRL correspondence (Table~\ref{tab:skill-mapping}).}
The phrases recovered by \methodname{}-SegCost reflect environment-specific reusable structure: navigation-and-place skeletons in ALFWorld, recipe-faithful processing chains in TextWorld-Cooking, and multi-step solver templates in Countdown-Stepwise. On ALFWorld the learned non-singleton phrases align closely with skills distilled in SkillRL's Skillbank~\citep{xia2026skillrl}.

\begin{table*}[!t]
\centering
\small
\setlength{\tabcolsep}{6pt}
\renewcommand{\arraystretch}{1.12}
\begin{tabularx}{\textwidth}{@{}
>{\raggedright\arraybackslash}p{0.20\textwidth}
>{\raggedright\arraybackslash}X
@{}}
\toprule
\textbf{Environment}
& \textbf{Representative top non-singleton phrases} \\
\midrule

\textbf{ALFWorld}
&
\begin{tabular}[t]{@{}l@{}}
\skill{Explore} \(\to\) \skill{Take} \\
\skill{Take} \(\to\) \skill{Transport} \(\to\) \skill{Deliver} \\
\skill{Explore} \(\to\) \skill{Take} \(\to\) \skill{Transport}
  \(\to\) \skill{Deliver} \\
\skill{Take} \(\to\) \skill{Transport} \(\to\) \skill{Transform}
  \(\to\) \skill{Deliver}
\end{tabular}
\\

\midrule
\textbf{TextWorld-Cooking}
&
\begin{tabular}[t]{@{}l@{}}
\skill{Read\_Recipe} \(\to\) \skill{Take} \\
\skill{Take} \(\to\) \skill{Cut} \\
\skill{Cut} \(\to\) \skill{Cook} \\
\skill{Cook} \(\to\) \skill{Prepare\_Meal} \(\to\) \skill{Eat\_Meal}
\end{tabular}
\\

\midrule
\textbf{Countdown-Stepwise}
&
\begin{tabular}[t]{@{}l@{}}
\skill{OP\_Sub-large-near\_target}
  \(\to\) \skill{OP\_Div-near\_target-small} \\
\skill{OP\_Add-small-small}
  \(\to\) \skill{OP\_Sub-large-near\_target} \\
\skill{OP\_Sub-large-large}
  \(\to\) \skill{OP\_Add-near\_target-small}
\end{tabular}
\\
\bottomrule
\end{tabularx}
\caption{
Representative high-frequency non-singleton phrases of the extracted
dictionary \(\widehat{\mathcal{C}}\) on each environment.
}
\label{tab:top-phrases}
\end{table*}

\begin{table*}[!t]
\centering
\footnotesize
\renewcommand{\arraystretch}{1.1}
\begin{tabular}{@{}>{\raggedright\arraybackslash}p{\dimexpr0.5\textwidth-\tabcolsep\relax}>{\raggedright\arraybackslash}p{\dimexpr0.5\textwidth-\tabcolsep\relax}@{}}
\toprule
\textbf{Extracted phrase (ALFWorld)} & \textbf{Closest SkillRL Skillbank skill} \\
\midrule
\skill{Transport} $\to$ \skill{Transform}
& Approach a tool/appliance with the carried item and apply it (e.g.\ clean, heat, cool) \\
\skill{Explore} $\to$ \skill{Explore} $\to$ \skill{Explore} $\to$ \skill{Explore}
& Systematic-search routine over candidate receptacles when the target location is unknown \\
\skill{Transport} $\to$ \skill{Deliver}
& Carry-and-place: move to the destination receptacle and place the held object \\
\skill{Explore} $\to$ \skill{Take} $\to$ \skill{Transport} $\to$ \skill{Deliver}
& Canonical pick-and-place skeleton for simple \emph{Place} tasks \\
\skill{Take} $\to$ \skill{Transport} $\to$ \skill{Transform} $\to$ \skill{Deliver}
& Clean/heat/cool-then-place: process the object at the appliance before delivery \\
\bottomrule
\end{tabular}
\caption{Mapping between high-frequency non-singleton phrases recovered by \methodname{}-SegCost on ALFWorld and the closest skill description in SkillRL's Skillbank~\citep{xia2026skillrl}. The correspondence is qualitative; the rule-based projection only carries verb-class membership, so a single learned phrase may match more than one Skillbank skill.}
\label{tab:skill-mapping}
\end{table*}

\end{document}